\declaretheorem[name=Theorem,refname={Theorem,Theorems},Refname={Theorem,Theorems}]{theorem}
\declaretheorem[name=Lemma,refname={Lemma,Lemmas},Refname={Lemma,Lemmas},sibling=theorem]{lemma}
\declaretheorem[name=Assumption,refname={Assumption,Assumptions},Refname={Assumption,Assumptions}]{assumption}
\newcommand{\cA}{\mathcal{A}}
\newcommand{\cD}{\mathcal{D}}
\newcommand{\cN}{\mathcal{N}}
\newcommand{\cX}{\mathcal{X}}
\newcommand{\realset}{\mathbb{R}}
\newcommand{\E}[1]{\mathbb{E}\left[#1\right]}
\newcommand{\Erv}[2]{\mathbb{E}_{#1}\left[#2\right]}
\newcommand{\prob}[1]{\mathbb{P}\left(#1\right)}
\newcommand{\var}[1]{\mathrm{var}\left[#1\right]}
\newcommand{\cov}[1]{\mathrm{cov}\left[#1\right]}
\newcommand*\dif{\mathop{}\!\mathrm{d}}
\newcommand{\I}[1]{\mathds{1} \! \left\{#1\right\}}
\newcommand{\normw}[2]{\|#1\|_{#2}}
\newcommand{\set}[1]{\left\{#1\right\}}
\newcommand{\T}{^\top}
\DeclareMathOperator*{\argmax}{arg\,max\,}
\DeclareMathOperator*{\argmin}{arg\,min\,}
\let\det\relax
\DeclareMathOperator{\det}{det}
\let\trace\relax
\DeclareMathOperator{\trace}{tr}
\mathchardef\mhyphen="2D
\newcommand{\diffts}{\ensuremath{\color{Green}\tt DiffTS}\xspace}
\newcommand{\dps}{\ensuremath{\color{Green}\tt DPS}\xspace}
\newcommand{\irls}{\ensuremath{\color{Green}\tt IRLS}\xspace}
\newcommand{\laplacedps}{\ensuremath{\color{Green}\tt LaplaceDPS}\xspace}
\newcommand{\linucb}{\ensuremath{\color{Green}\tt LinUCB}\xspace}
\newcommand{\mixts}{\ensuremath{\color{Green}\tt MixTS}\xspace}
\newcommand{\ts}{\ensuremath{\color{Green}\tt TS}\xspace}
\newcommand{\tunedts}{\ensuremath{\color{Green}\tt TunedTS}\xspace}
\title{Online Posterior Sampling with a Diffusion Prior}
\author{
  Branislav Kveton \\
  Adobe Research\thanks{The work was done at AWS AI Labs.}
  \And
  Boris N. Oreshkin \\
  Amazon
  \And
  Youngsuk Park \\
  AWS AI Labs
  \And
  Aniket Deshmukh \\
  AWS AI Labs
  \And
  Rui Song \\
  Amazon
}
\begin{document}

\maketitle

\begin{abstract}
Posterior sampling in contextual bandits with a Gaussian prior can be implemented exactly or approximately using the Laplace approximation. The Gaussian prior is computationally efficient but it cannot describe complex distributions. In this work, we propose approximate posterior sampling algorithms for contextual bandits with a diffusion model prior. The key idea is to sample from a chain of approximate conditional posteriors, one for each stage of the reverse diffusion process, which are obtained by the Laplace approximation. Our approximations are motivated by posterior sampling with a Gaussian prior, and inherit its simplicity and efficiency. They are asymptotically consistent and perform well empirically on a variety of contextual bandit problems.
\end{abstract}

\section{Introduction}
\label{sec:introduction}

A \emph{multi-armed bandit} \citep{lai85asymptotically,auer02finitetime,lattimore19bandit} is an online learning problem where an agent sequentially interacts with an environment over $n$ rounds with the goal of maximizing its rewards. In each round, it takes an \emph{action} and receives its \emph{stochastic reward}. The mean rewards of the actions are unknown \emph{a priori} and must be learned. This leads to the \emph{exploration-exploitation dilemma}: \emph{explore} actions to learn about them or \emph{exploit} the action with the highest estimated reward. Bandits have been successfully applied to problems where uncertainty modeling and adaptation are beneficial, such recommender systems \citep{li10contextual,zhao13interactive,kawale15efficient,li16collaborative} and hyper-parameter optimization \citep{li18hyperband}.

Contextual bandits \citep{langford08epochgreedy,li10contextual} with linear \citep{dani08stochastic,abbasi-yadkori11improved} and \emph{generalized linear models (GLMs)} \citep{filippi10parametric,li17provably,abeille17linear,kveton20randomized} have become popular due to the their flexibility and efficiency. The features in these models can be hand-crafted or learned from historic data \citep{riquelme18deep}, and the models can be also updated incrementally \citep{abbasi-yadkori11improved,jun17scalable}. While the original algorithms for linear and GLM bandits were based on \emph{upper confidence bounds (UCBs)} \citep{dani08stochastic,abbasi-yadkori11improved,filippi10parametric}, \emph{Thompson sampling (TS)} is more popular in practice \citep{chapelle11empirical,agrawal13thompson,russo14learning,russo18tutorial}. The key idea in TS is to explore by sampling from the posterior distribution of model parameter $\theta_*$. TS uses the prior knowledge about $\theta_*$ to speed up exploration \citep{chapelle11empirical,riquelme18deep,lu19informationtheoretic,basu21noregrets,hong22hierarchical,hong22deep,aouali23mixedeffect}. When the prior is a multivariate Gaussian, the posterior of $\theta_*$ can be updated and sampled from efficiently \citep{chapelle11empirical}. This prior has a limited expressive power, because it cannot even represent multimodal distributions. To address this, we study posterior sampling with a diffusion prior. The main benefit of such priors is that they can represent complex distributions and be learned from data.

We make the following contributions. First, we propose novel posterior sampling approximations for linear models and GLMs with a diffusion model prior. The key idea is to sample from a chain of approximate conditional posteriors, one for each stage of the reverse process, which are estimated in a closed form. In linear models, each conditional is a product of two Gaussians, representing prior knowledge and diffused evidence (\cref{thm:linear posterior}). In GLMs, each conditional is obtained by a Laplace approximation, which mixes prior knowledge and evidence (\cref{thm:glm posterior}). Our approximations are motivated by posterior sampling with Gaussian priors, and inherit its simplicity and efficiency. Prior works (\cref{sec:related work}) sampled from the posterior using the likelihood score, and their approximations become unstable when the score is high. We combine the likelihood with conditional priors, in each stage of the diffusion model, using the Laplace approximation. The resulting posterior concentrates at a single point and can be sampled from efficiently even if the likelihood score is high. We prove that this approximation is asymptotically consistent.

Our second contribution is in theory. We properly derive our posterior approximations (\cref{thm:linear posterior,thm:glm posterior}) and show their asymptotic consistency (\cref{thm:asymptotic consistency}). The key idea in the proof of \cref{thm:asymptotic consistency} is that the conditional posteriors concentrate at a scaled unknown model parameter as the number of observations increases. While this claim is asymptotic, it is an expected property of a posterior distribution. Many prior works, such as \citet{chung23diffusion}, do not propose asymptotically consistent approximations. All of our main results rely on a novel approximation of clean samples by scaled diffused samples (\cref{sec:key approximation}). The most challenging part of the analysis is \cref{thm:asymptotic consistency}, where we analyze an asymptotic behavior of a chain of $T$ dependent random vectors.

Our last contribution is an empirical evaluation on contextual bandits. We focus on bandits because the ability to represent all levels of uncertainty precisely is critical for exploration. Our experiments show that a score-based method fails to do so (\cref{sec:synthetic experiment}). Note that our posterior approximations are general and not restricted to bandits.

\section{Setting}
\label{sec:setting}

We start with introducing our notation. Random variables are capitalized, except for Greek letters like $\theta$. We denote the marginal and conditional probabilities under probability measure $p$ by $p(X = x)$ and $p(X = x \mid Y = y)$, respectively. When the random variables are clear from context, we write $p(x)$ and $p(x \mid y)$. We denote by $X_{n : m}$ and $x_{n : m}$ a collection of random variables and their values, respectively. For a positive integer $n$, we define $[n] = \set{1, \dots, n}$. The indicator function is $\I{\cdot}$. The $i$-th entry of vector $v$ is $v_i$. If the vector is already indexed, such as $v_j$, we write $v_{j, i}$. We denote the maximum and minimum eigenvalues of matrix $M \in \realset^{d \times d}$ by $\lambda_1(M)$ and $\lambda_d(M)$, respectively.

The posterior sampling problem can be formalized as follows. Let $\theta_* \in \Theta$ be an unknown \emph{model parameter} and $\Theta \subseteq \realset^d$ be the space of model parameters. Let $h = \set{(\phi_\ell, y_\ell)}_{\ell \in [N]}$ be the \emph{history} of $N$ noisy observations of $\theta_*$, where $\phi_\ell \in \realset^d$ is the feature vector for $y_\ell \in \realset$. We assume that
\begin{align}
  y_\ell
  = g(\phi_\ell\T \theta_*) + \varepsilon_\ell\,,
  \label{eq:observation model}
\end{align}
where $g: \realset \to \realset$ is the \emph{mean function} and $\varepsilon_\ell$ is an independent zero-mean $\sigma^2$-sub-Gaussian noise for $\sigma > 0$. Let $p(h \mid \theta_*)$ be the \emph{likelihood} of observations in history $h$ under model parameter $\theta_*$ and $p(\theta_*)$ be its \emph{prior probability}. By Bayes' rule, the posterior distribution of $\theta_*$ given $h$ is
\begin{align}
  p(\theta_* \mid h)
  \propto p(h \mid \theta_*) \, p(\theta_*)\,.
  \label{eq:posterior distribution}
\end{align}
We want to sample from $p(\cdot \mid h)$ efficiently when the prior distribution is represented by a diffusion model. As a stepping stone, we review existing posterior formulas for multivariate Gaussian priors. This motivates our solution for diffusion model priors.

\subsection{Linear Model}
\label{sec:linear model}

The posterior of $\theta_*$ in linear models can be derived as follows.

\begin{assumption}
\label{ass:linear model likelihood} Let $g$ in \eqref{eq:observation model} be an identity and $\varepsilon_\ell \sim \cN(0, \sigma^2)$. Then the likelihood of $h$ under model parameter $\theta_*$ is $p(h \mid \theta_*) \propto \exp[- \sum_{\ell = 1}^N (y_\ell - \phi_\ell\T \theta_*)^2 / (2 \sigma^2)]$.
\end{assumption}

Let $p(\theta_*) = \cN(\theta_*; \theta_0, \Sigma_0)$ be the prior distribution of $\theta_*$, where $\theta_0 \in \realset^d$ and $\Sigma_0 \in \realset^{d \times d}$ are the prior mean and covariance, respectively. Then $p(\theta_* \mid h) \propto \cN(\theta_*; \hat{\theta}, \hat{\Sigma})$ \citep{bishop06pattern}, where
\begin{align*}
  \textstyle
  \hat{\theta}
  = \hat{\Sigma} \left(\Sigma_0^{-1} \theta_0 +
  \sigma^{-2} \sum_{\ell = 1}^N \phi_\ell y_\ell\right)\,, \quad
  \hat{\Sigma}
  = \left(\Sigma_0^{-1} +
  \sigma^{-2} \sum_{\ell = 1}^N \phi_\ell \phi_\ell\T\right)^{-1}\,,
\end{align*}
are the posterior mean and covariance, respectively. In this work, we write them equivalently as
\begin{align}
  \hat{\theta}
  = \hat{\Sigma} (\Sigma_0^{-1} \theta_0 + \bar{\Sigma}^{-1} \bar{\theta})\,, \quad
  \hat{\Sigma}
  = (\Sigma_0^{-1} + \bar{\Sigma}^{-1})^{-1}\,,
  \label{eq:linear posterior}
\end{align}
where $\bar{\theta} = \sigma^{-2} \bar{\Sigma} \sum_{\ell = 1}^N \phi_\ell y_\ell$ and $\bar{\Sigma}^{-1} = \sigma^{-2} \sum_{\ell = 1}^N \phi_\ell \phi_\ell\T$ are the empirical mean and inverse of its covariance, respectively. Therefore, the posterior of $\theta_*$ is a product of two multivariate Gaussians: $\cN(\theta_0, \Sigma_0)$ representing prior knowledge about $\theta_*$ and $\cN(\bar{\theta}, \bar{\Sigma})$ representing empirical evidence.

\subsection{Generalized Linear Model}
\label{sec:glm}

\begin{algorithm}[t]
  \caption{\irls: Iteratively reweighted least squares.}
  \label{alg:irls}
  \begin{algorithmic}[1]
    \State \textbf{Input:} Prior parameters $\theta_0$ and $\Sigma_0$, history of observations $h = \set{(\phi_\ell, y_\ell)}_{\ell \in [N]}$
    \Statex \vspace{-0.05in}
    \State Initialize $\hat{\theta} \in \realset^d$
    \Repeat
      \For{stage $\ell = 1, \dots, N$}
        \State $z_\ell \gets \phi_\ell\T \hat{\theta} +
        (y_\ell - g(\phi_\ell\T \hat{\theta})) / \dot{g}(\phi_\ell\T \hat{\theta})$
      \EndFor
      \State $\hat{\Sigma} \gets \left(\Sigma_0^{-1} + \sum_{\ell = 1}^N
      \dot{g}(\phi_\ell\T \hat{\theta}) \phi_\ell \phi_\ell\T\right)^{-1}$
      \State $\hat{\theta} \gets \hat{\Sigma} \left(\Sigma_0^{-1} \theta_0 +
      \sum_{\ell = 1}^N \dot{g}(\phi_\ell\T \hat{\theta}) \phi_\ell z_\ell\right)$
    \Until{$\hat{\theta}$ converges}
    \Statex \vspace{-0.05in}
    \State \textbf{Output:} Posterior mean $\hat{\theta}$ and covariance $\hat{\Sigma}$
  \end{algorithmic}
\end{algorithm}

\emph{Generalized linear models (GLMs)} \citep{mccullagh89generalized} extend linear models (\cref{sec:linear model}) to non-linear monotone \emph{mean functions} $g$ in \eqref{eq:observation model}. For instance, in logistic regression, $g(u) = 1 / (1 + \exp[- u])$ is a sigmoid. The likelihood of observations in GLMs has the following form \citep{kveton20randomized}.

\begin{assumption}
\label{ass:glm likelihood} Let $h = \set{(\phi_\ell, y_\ell)}_{\ell \in [N]}$ be a history of $N$ observations under mean function $g$ and the corresponding noise. Then $\log p(h \mid \theta_*) \propto \sum_{\ell = 1}^N y_\ell \phi_\ell\T \theta_* - b(\phi_\ell\T \theta_*) + c(y_\ell)$, where $c$ is a real function and $b$ is a function whose derivative is the mean function, $\dot{b} = g$.
\end{assumption}

The posterior distribution of $\theta_*$ in GLMs does not have a closed form in general \citep{bishop06pattern}. Therefore, it is often approximated by the \emph{Laplace approximation}. Let the prior distribution of the model parameter be $p(\theta_*) = \cN(\theta_*; \theta_0, \Sigma_0)$, as in \cref{sec:linear model}. Then the Laplace approximation is $\cN(\hat{\theta}, \hat{\Sigma})$, where $\hat{\theta}$ is the \emph{maximum a posteriori (MAP) estimate} of $\theta_*$ and $\hat{\Sigma}$ is the corresponding covariance. Note that the Laplace approximation can be applied to non-Gaussian priors.

The MAP estimate $\hat{\theta}$ can be obtained by \emph{iteratively reweighted least squares (IRLS)} \citep{wolke88iteratively}, which we present in \cref{alg:irls}. \irls is a Newton-type algorithm that computes $\hat{\theta}$ iteratively (lines 6 and 7). It converges to the optimal solution due to the strong convexity of the problem. The solution has a similar structure to \eqref{eq:linear posterior}. That is, $\cN(\hat{\theta}, \hat{\Sigma})$ is a product of two multivariate Gaussians, representing prior knowledge about $\theta_*$ and empirical evidence. The new quantities in GLMs are the derivative of the mean function $\dot{g}$ and pseudo-observations $z_\ell$ (line 5), which play the role of observations $y_\ell$ in \cref{sec:linear model}.

\subsection{Towards Diffusion Model Priors}
\label{sec:towards diffusion model priors}

The assumption that $p(\theta_*) = \cN(\theta_*; \theta_0, \Sigma_0)$ is limiting, for instance because it precludes multimodal priors. We relax it by representing $p(\theta_*)$ by a diffusion model, which we call a \emph{diffusion model prior}. We propose efficient posterior sampling approximations for this prior, where the prior and empirical evidence are mixed similarly to \eqref{eq:linear posterior} and \irls. We review diffusion models next.

\section{Diffusion Models}
\label{sec:diffusion models}

Diffusion models \citep{sohldickstein15deep,ho20denoising} are generative models trained by diffusing samples from unknown and hard to represent distributions. They can be viewed in multiple ways \citep{song21scorebased}. We adopt the probabilistic formulation and presentation of \citet{ho20denoising}. A \emph{diffusion model} is a graphical model with $T$ stages indexed by $t \in [T]$. Each stage $t$ is associated with a \emph{latent variable} $S_t \in \realset^d$. A \emph{sample} from the model is represented by an \emph{observed variable} $S_0 \in \realset^d$. We visualize a diffusion model in \cref{fig:diffusion model}. In the \emph{forward process}, a clean sample $s_0$ is diffused through a sequence of variables $S_1, \dots, S_T$. This process is used to learn the \emph{reverse process}, where the clean sample $s_0$ is generated through a sequence of variables $S_T, \dots, S_0$. To sample $s_0$ from the posterior (\cref{sec:posterior sampling}), we add a random variable $H$ that represents partial information about $s_0$. We introduce forward and reverse diffusion processes next. Learning of the reverse process is described in \cref{sec:learning reverse process}. While this is a critical component of diffusion models, it is not necessary to introduce our posterior approximations.

\textbf{Forward process.} In the forward process, a clean sample $s_0$ is diffused through a chain of latent variables $S_1, \dots S_T$ (\cref{fig:diffusion model}). We denote the probability measure under this process by $q$ and define its joint probability distribution as
\begin{align}
  \textstyle
  q(s_{1 : T} \mid s_0)
  = \prod_{t = 1}^T q(s_t \mid s_{t - 1})\,, \quad
  \forall t \in [T]:
  q(s_t \mid s_{t - 1})
  & = \cN(s_t; \sqrt{\alpha_t} s_{t - 1}, \beta_t I_d)\,,
  \label{eq:forward process}
\end{align}
where $q(s_t \mid s_{t - 1})$ is the conditional density of mapping a less diffused $s_{t - 1}$ to a more diffused $s_t$. The diffusion rate is set by parameters $\alpha_t \in (0, 1)$ and $\beta_t = 1 - \alpha_t$. The forward process is sampled from as follows. First, a clean sample $s_0$ is chosen. Then $S_t \sim q(\cdot \mid s_{t - 1})$ are sampled, from $t = 1$ to $t = T$.

\textbf{Reverse process.} In the reverse process, a clean sample $s_0$ is generated through a chain of variables $S_T, \dots, S_0$ (\cref{fig:diffusion model}). We denote the probability measure under this process by $p$ and define its joint probability distribution as
\begin{align}
  \textstyle
  p(s_{0 : T})
  & = p(s_T) \prod_{t = 1}^T p(s_{t - 1} \mid s_t)\,,
  \label{eq:reverse process} \\
  p(s_T) &
  = \cN(s_T; \mathbf{0}_d, I_d)\,, \quad
  \forall t \in [T]:
  p(s_{t - 1} \mid s_t)
  = \cN(s_{t - 1}; \mu_t(s_t), \Sigma_t)\,,
  \nonumber
\end{align}
where $p(s_{t - 1} \mid s_t)$ is the conditional density of mapping a more diffused $s_t$ to a less diffused $s_{t - 1}$. The function $\mu_t$ predicts the mean of $S_{t - 1} \mid s_t$ and is learned (\cref{sec:learning reverse process}). As in \citet{ho20denoising}, we keep the covariance fixed at $\Sigma_t = \tilde{\beta}_t I_d$, where $\tilde{\beta}_t = \frac{1 - \bar{\alpha}_{t - 1}}{1 - \bar{\alpha}_t} \beta_t$ and $\bar{\alpha}_t = \prod_{\ell = 1}^t \alpha_\ell$. This is known as a \emph{stable diffusion}. We make this assumption only to simplify exposition. All our derivations in \cref{sec:posterior sampling} hold when $\Sigma_t$ is learned, for instance as in \citet{bao22estimating}.

This process is called reverse because it is learned by reversing the forward process. The reverse process is sampled from as follows. First, an initial diffused sample $S_T \sim p$ is sampled. After that, $S_{t - 1} \sim p(\cdot \mid s_t)$ are sampled, from $t = T$ to $t = 1$.

\begin{figure}[t]
  \centering
  \begin{tabular}{c c}
    Forward process (probability measure $q$) &
    Reverse process (probability measure $p$) \\
    $S_T \gets S_{T - 1} \gets \dots \gets S_1 \gets S_0$ &
    $S_T \to S_{T - 1} \to \dots \to S_1 \to S_0 \to H$
  \end{tabular}
  \caption{Graphical models of the forward and reverse processes in the diffusion model. The variable $H$ represents partial information about $S_0$.}
  \label{fig:diffusion model}
\end{figure}

\section{Posterior Sampling}
\label{sec:posterior sampling}

This section is organized as follows. In \cref{sec:chain model posterior}, we show how to sample from a chain of random variables conditioned on observations. In \cref{sec:linear posterior,sec:glm posterior}, we specialize this to the observation models in \cref{sec:setting}.

\subsection{Chain Model Posterior}
\label{sec:chain model posterior}

Let $h = \set{(\phi_\ell, y_\ell)}_{\ell \in [N]}$ denote a \emph{history} of $N$ observations (\cref{sec:setting}) and $H$ be the corresponding random variable. In this section, we assume that $h$ is fixed. The Markovian structure of the reverse process (\cref{fig:diffusion model}) implies that the joint probability distribution conditioned on $h$ factors as
\begin{align*}
  p(s_{0 : T} \mid h)
  = p(s_T \mid h) \prod_{t = 1}^T p(s_{t - 1} \mid s_t, h)\,.
\end{align*}
Therefore, $p(s_{0 : T} \mid h)$ can be sampled from efficiently by first sampling from $p(s_T \mid h)$ and then from $T$ conditional distributions $p(s_{t - 1} \mid s_t, h)$. We derive these next.

\begin{lemma}
\label{lem:chain model posterior} Let $p$ be a probability measure over the reverse process (\cref{fig:diffusion model}). Then
\begin{align*}
  p(s_T \mid h)
  & \propto \textstyle \int_{s_0} p(h \mid s_0) \, p(s_0 \mid s_T) \dif s_0 \, p(s_T)\,, \\
  \forall t \in [T] \setminus \set{1}: p(s_{t - 1} \mid s_t, h)
  & \propto \textstyle \int_{s_0} p(h \mid s_0) \, p(s_0 \mid s_{t - 1}) \dif s_0 \,
  p(s_{t - 1} \mid s_t)\,, \\
  p(s_0 \mid s_1, h)
  & \propto p(h \mid s_0) \, p(s_0 \mid s_1)\,.
\end{align*}
\end{lemma}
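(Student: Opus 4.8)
The plan is to derive each of the three identities directly from Bayes' rule together with the conditional independence relations encoded by the reverse-process graphical model in \cref{fig:diffusion model}. Two structural facts drive everything. First, since $H$ is a child of $S_0$ only, conditioning on $s_0$ renders $H$ independent of every other latent variable, so $p(h \mid s_0, s_k) = p(h \mid s_0)$ for any index $k$. Second, because the reverse process is a Markov chain $S_T \to \dots \to S_0 \to H$, conditioning on $s_{t - 1}$ blocks every path from $S_t$ to $H$; hence $p(h \mid s_{t - 1}, s_t) = p(h \mid s_{t - 1})$.

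I would start with the base case $t = 1$. Writing $p(s_0 \mid s_1, h) \propto p(s_0, h \mid s_1) = p(h \mid s_0, s_1) \, p(s_0 \mid s_1)$ and then dropping $s_1$ from the likelihood via the first fact immediately yields $p(s_0 \mid s_1, h) \propto p(h \mid s_0) \, p(s_0 \mid s_1)$, which is the third claim.

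For $t \in [T] \setminus \set{1}$, I would expand $p(s_{t - 1} \mid s_t, h) \propto p(s_{t - 1}, h \mid s_t) = p(h \mid s_{t - 1}, s_t) \, p(s_{t - 1} \mid s_t)$. The second fact lets me replace $p(h \mid s_{t - 1}, s_t)$ by $p(h \mid s_{t - 1})$, and marginalizing over $s_0$ gives $p(h \mid s_{t - 1}) = \int_{s_0} p(h \mid s_0, s_{t - 1}) \, p(s_0 \mid s_{t - 1}) \dif s_0 = \int_{s_0} p(h \mid s_0) \, p(s_0 \mid s_{t - 1}) \dif s_0$, where the last step applies the first fact to the integrand. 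The top identity for $p(s_T \mid h)$ follows by the same argument applied at the root of the chain: $p(s_T \mid h) \propto p(h \mid s_T) \, p(s_T)$ and $p(h \mid s_T) = \int_{s_0} p(h \mid s_0) \, p(s_0 \mid s_T) \dif s_0$.

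The routine part is the algebra of conditioning and marginalizing; the one step that requires care is the justification that $s_t$ may be dropped from the conditioning in $p(h \mid s_{t - 1}, s_t)$. This is precisely the Markov property of the reverse chain, namely that all influence of $S_t$ on $H$ is mediated through $S_{t - 1}$, and I would state it explicitly as the main observation rather than leave it implicit, since the entire factorization hinges on it.
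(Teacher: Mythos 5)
Your proposal is correct and follows essentially the same route as the paper's own proof: Bayes' rule with the conditional independences $p(h \mid s_{t-1}, s_t) = p(h \mid s_{t-1})$ and $p(h \mid s_0, s_{t-1}) = p(h \mid s_0)$, followed by marginalization over $s_0$ to express $p(h \mid s_{t-1})$ as the stated integral. The only cosmetic difference is that you treat $t = 1$ as an explicit base case and write Bayes' rule via the joint $p(s_{t-1}, h \mid s_t)$ rather than with the explicit normalizer $p(h \mid s_t)$, which changes nothing of substance.
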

\begin{proof}
The claim is proved in \cref{sec:chain model posterior proof}.
\end{proof}

\subsection{Linear Model Posterior}
\label{sec:linear posterior}

Now we specialize \cref{lem:chain model posterior} to the diffusion model prior (\cref{sec:diffusion models}) and linear models (\cref{sec:linear model}). The prior distribution is the reverse process in \eqref{eq:reverse process},
\begin{align*}
  p(s_T)
  = \cN(s_T; \mathbf{0}_d, I_d)\,, \quad
  \forall t \in [T]:
  p(s_{t - 1} \mid s_t)
  = \cN(s_{t - 1}; \mu_t(s_t), \Sigma_t)\,.
\end{align*}
The term $p(h \mid s_0)$ is the likelihood of observations in \cref{ass:linear model likelihood}. The main challenge in using the lemma is that the conditional densities of clean samples $p(s_0 \mid S_T)$ and $p(s_0 \mid s_t)$ are complex \citep{chung23diffusion}. To get around this, we make an additional assumption, which is discussed in \cref{sec:key approximation}.

\begin{theorem}
\label{thm:linear posterior} Let $p$ be a probability measure over the reverse process (\cref{fig:diffusion model}). Let $\bar{\theta}$ and $\bar{\Sigma}^{-1}$ be defined as in \eqref{eq:linear posterior}. Suppose that
\begin{align}
  \textstyle
  \int_{s_0} p(h \mid s_0) \, p(s_0 \mid s_t) \dif s_0
  \propto p(h \mid s_t / \sqrt{\bar{\alpha}_t})
  \label{eq:integral approximation}
\end{align}
holds for all $t \in [T]$. Then $p(s_T \mid h) \propto \cN(s_T; \hat{\mu}_{T + 1}(h), \hat{\Sigma}_{T + 1}(h))$, where
\begin{align}
  \hat{\mu}_{T + 1}(h)
  = \hat{\Sigma}_{T + 1}(h)
  (\underbrace{I_d \, \mathbf{0}_d}_{\emph{Prior}} +
  \underbrace{\bar{\Sigma}^{-1}
  \bar{\theta} / \sqrt{\bar{\alpha}_T}}_{\emph{Evidence}})\,, \quad
  \hat{\Sigma}_{T + 1}(h)
  = (\underbrace{I_d}_{\emph{Prior}} +
  \underbrace{\bar{\Sigma}^{-1} / \bar{\alpha}_T}_{\emph{Evidence}})^{-1}\,.
  \label{eq:posterior T}
\end{align}
For $t \in [T]$, we have $p(s_{t - 1} \mid s_t, h) \propto \cN(s_{t - 1}; \hat{\mu}_t(s_t, h), \hat{\Sigma}_t(h))$, where
\begin{align}
  \hat{\mu}_t(s_t, h)
  = \hat{\Sigma}_t(h)
  (\underbrace{\Sigma_t^{-1} \mu_t(s_t)}_{\emph{Prior}} +
  \underbrace{\bar{\Sigma}^{-1}
  \bar{\theta} / \sqrt{\bar{\alpha}_{t - 1}}}_{\emph{Evidence}})\,, \quad
  \hat{\Sigma}_t(h)
  = (\underbrace{\Sigma_t^{-1}}_{\emph{Prior}} +
  \underbrace{\bar{\Sigma}^{-1} / \bar{\alpha}_{t - 1}}_{\emph{Evidence}})^{-1}\,.
  \label{eq:posterior t - 1}
\end{align}
\end{theorem}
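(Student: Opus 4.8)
The plan is to start from \cref{lem:chain model posterior} and substitute the approximation \eqref{eq:integral approximation} into each of its three cases, then recognize that every resulting expression is a product of Gaussians in the variable being sampled. The key structural observation is that \cref{ass:linear model likelihood} gives $p(h \mid s_0) \propto \exp[- \sigma^{-2} \sum_{\ell} (y_\ell - \phi_\ell\T s_0)^2]$, which is Gaussian in $s_0$, so the pushed-forward likelihood $p(h \mid s_t / \sqrt{\bar{\alpha}_t})$ is Gaussian in $s_t$. Using the definitions $\bar{\Sigma}^{-1} = \sigma^{-2} \sum_\ell \phi_\ell \phi_\ell\T$ and $\bar{\theta} = \sigma^{-2} \bar{\Sigma} \sum_\ell \phi_\ell y_\ell$ from \eqref{eq:linear posterior}, I would complete the square to write $p(h \mid s_0) \propto \cN(s_0; \bar{\theta}, \bar{\Sigma})$ (up to a constant in $s_0$), which is exactly the evidence Gaussian identified in \cref{sec:linear model}.

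First I would handle the substitution $s_0 = s_t / \sqrt{\bar\alpha_t}$. Since $\cN(s_t/\sqrt{\bar\alpha_t}; \bar\theta, \bar\Sigma)$ viewed as a function of $s_t$ is proportional to a Gaussian in $s_t$, I would rescale: the mean becomes $\sqrt{\bar\alpha_t}\,\bar\theta$ and the covariance becomes $\bar\alpha_t \bar\Sigma$, equivalently the precision becomes $\bar\Sigma^{-1}/\bar\alpha_t$ and the precision-weighted mean (the natural parameter) becomes $\bar\Sigma^{-1}\bar\theta/\sqrt{\bar\alpha_t}$. This is the single computation that produces the $1/\sqrt{\bar\alpha_t}$ and $1/\bar\alpha_t$ factors appearing in both \eqref{eq:posterior T} and \eqref{eq:posterior t - 1}; I would state it once as a scaling lemma for Gaussians and reuse it.

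With the evidence factor in hand, each case reduces to multiplying two Gaussians and reading off the standard product-of-Gaussians formula, namely that precisions add and precision-weighted means add. For $p(s_T \mid h)$, the prior factor is $p(s_T) = \cN(s_T; \mathbf 0_d, I_d)$, with precision $I_d$ and natural parameter $I_d \mathbf 0_d$, and the evidence factor (at stage $t = T$) has precision $\bar\Sigma^{-1}/\bar\alpha_T$ and natural parameter $\bar\Sigma^{-1}\bar\theta/\sqrt{\bar\alpha_T}$; adding these yields exactly \eqref{eq:posterior T}. For the conditional cases with $t \in [T]$, the prior factor is $p(s_{t-1} \mid s_t) = \cN(s_{t-1}; \mu_t(s_t), \Sigma_t)$ with precision $\Sigma_t^{-1}$ and natural parameter $\Sigma_t^{-1}\mu_t(s_t)$, and the evidence factor is evaluated at stage $t-1$, giving precision $\bar\Sigma^{-1}/\bar\alpha_{t-1}$ and natural parameter $\bar\Sigma^{-1}\bar\theta/\sqrt{\bar\alpha_{t-1}}$; adding yields \eqref{eq:posterior t - 1}. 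The boundary case $t = 1$ in \cref{lem:chain model posterior} has no integral, but substituting $\bar\alpha_0 = 1$ (the empty product) shows the formula \eqref{eq:posterior t - 1} specializes correctly, so the three cases merge into the two displayed statements.

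The main obstacle is bookkeeping rather than a genuine difficulty: I must be careful that the index on $\bar\alpha$ matches the stage at which the likelihood is pushed forward — the integral in \cref{lem:chain model posterior} for $p(s_{t-1}\mid s_t,h)$ involves $p(s_0 \mid s_{t-1})$, so the approximation \eqref{eq:integral approximation} is applied at index $t-1$, producing $\bar\alpha_{t-1}$ and not $\bar\alpha_t$. Getting this off-by-one indexing right across the $s_T$ case, the generic $t$ case, and the $t=1$ boundary is where an error would most plausibly creep in; I would verify the indexing explicitly by tracking which latent variable conditions the clean sample $s_0$ in each factor. The remaining algebra — completing the square, rescaling, and adding natural parameters — is routine once the evidence Gaussian and the scaling lemma are established.
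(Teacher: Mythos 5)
Your proposal is correct and takes essentially the same route as the paper's own proof: rewrite the likelihood as the evidence Gaussian $\cN(s_0; \bar{\theta}, \bar{\Sigma})$, apply \eqref{eq:integral approximation} and rescale it into a Gaussian in the sampled variable (mean $\sqrt{\bar{\alpha}_t}\bar{\theta}$, covariance $\bar{\alpha}_t\bar{\Sigma}$), then combine with the Gaussian prior via the product-of-Gaussians formula (\cref{lem:gaussian product}), with the same $t-1$ versus $t$ indexing. Your explicit check of the $t = 1$ boundary case via $\bar{\alpha}_0 = 1$ is a minor bookkeeping point that the paper leaves implicit.
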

\begin{proof}
The proof is in \cref{sec:linear posterior proof}. It has four steps. First, we fix stage $t$ and apply approximation \eqref{eq:integral approximation}. Second, we rewrite the likelihood as in \eqref{eq:linear posterior}. Third, we reparameterize it as a function of $s_t$. At the end, we combine the likelihood with the Gaussian prior using \cref{lem:gaussian product} in \cref{sec:supporting lemmas}.
\end{proof}

\begin{algorithm}[t]
  \caption{\laplacedps: Laplace posterior sampling with a diffusion model prior.}
  \label{alg:diffusion posterior sampling}
  \begin{algorithmic}[1]
    \State \textbf{Input:} Diffusion model parameters $(\mu_t, \Sigma_t)_{t \in [T]}$, history of observations $h$
    \Statex \vspace{-0.05in}
    \State Initial sample $S_T \sim \cN(\hat{\mu}_{T + 1}(h), \hat{\Sigma}_{T + 1}(h))$
    \For{stage $t = T, \dots, 1$}
      \State $S_{t - 1} \sim \cN(\hat{\mu}_t(S_t, h), \hat{\Sigma}_t(h))$
    \EndFor
    \Statex \vspace{-0.1in}
    \State \textbf{Output:} Posterior sample $S_0$
  \end{algorithmic}
\end{algorithm}

The algorithm that samples from the posterior distribution in \cref{thm:linear posterior} is presented in \cref{alg:diffusion posterior sampling}. We call it \emph{Laplace diffusion posterior sampling (\laplacedps)} because its generalization to GLMs uses the Laplace approximation. \laplacedps samples from a chain of products of two distributions: one distribution represents the pre-trained diffusion model and does not depend on history $h$, and the other represents the history $h$. The sampling is implemented as follows. The initial variable $S_T$ is sampled conditioned on $h$ (line 2) from the distribution in \eqref{eq:posterior T}. This distribution is a product of the $h$-independent prior $\cN(\mathbf{0}_d, I_d)$ and the $h$-dependent distribution of the diffused evidence up to stage $T$, $\cN(\sqrt{\bar{\alpha}_T} \bar{\theta}, \bar{\alpha}_T \bar{\Sigma})$. Then, for any $t \in [T]$, $S_{t - 1}$ is sampled conditioned on $s_t$ and evidence $h$ (line 4) from the distribution in \eqref{eq:posterior t - 1}. This distribution is a product of the $h$-independent conditional prior $\cN(\mu_t(s_t), \Sigma_t)$, from the pre-trained model, and the $h$-dependent distribution of the diffused evidence up to stage $t - 1$, $\cN(\sqrt{\bar{\alpha}_{t - 1}} \bar{\theta}, \bar{\alpha}_{t - 1} \bar{\Sigma})$. The last variable $S_0$ is the clean sample. When compared to \cref{sec:setting}, the prior and evidence are mixed conditionally in a $T$-stage chain. This increases the computational cost $T$ times, as discussed in \cref{sec:conclusions}.

\subsection{Key Approximation in \cref{thm:linear posterior}}
\label{sec:key approximation}

Now we motivate our assumption in \eqref{eq:integral approximation}. Simply put, we assume that $s_0 = s_t / \sqrt{\bar{\alpha}_t}$, where $s_0$ is a clean sample and $s_t$ is the corresponding diffused sample in stage $t$. This is motivated by the forward process, which relates $s_t$ and $s_0$ as $s_t = \sqrt{\bar{\alpha}_t} s_0 + \sqrt{1 - \bar{\alpha}_t} \varepsilon_t$, where $\varepsilon_t \sim \cN(\mathbf{0}_d, I_d)$ is a standard Gaussian noise \citep{ho20denoising}. After rearranging, we get $s_0 = (s_t - \sqrt{1 - \bar{\alpha}_t} \varepsilon_t) / \sqrt{\bar{\alpha}_t}$, and therefore $s_0$ can be viewed as a random variable with mean $s_t / \sqrt{\bar{\alpha}_t}$. The consequence of \eqref{eq:integral approximation} is that the likelihood becomes a function of $s_t$, which yields a closed form when multiplied by the conditional prior, also a function of $s_t$. Our approximation can be also viewed as the Tweedie's formula in \citet{chung23diffusion} where the score component is neglected.

Our approximation has several notable properties. First, $\sqrt{(1 - \bar{\alpha}_t) / \bar{\alpha}_t} \to 0$ as $t \to 1$. Therefore, it becomes more precise in later stages of the reverse process. Second, in the absence of evidence $h$, the approximation vanishes, and all posterior distributions in \cref{thm:linear posterior} reduce to the priors in \eqref{eq:reverse process}. Finally, as the number of observations increases, sampling from the posterior in \cref{thm:linear posterior} is asymptotically consistent.

\begin{theorem}
\label{thm:asymptotic consistency} Fix $\theta_* \in \realset^d$. Let $\tilde{\theta} \gets {\laplacedps}((\mu_t, \Sigma_t)_{t \in [T]}, h)$, where $h = \set{(\phi_\ell, y_\ell)}_{\ell \in [N]}$ is a \emph{history} of $N$ observations. Suppose that $\lambda_d(\bar{\Sigma}^{-1}) \to \infty$ as $N \to \infty$, where $\bar{\Sigma}$ is defined in \eqref{eq:linear posterior}. Then $\prob{\lim_{N \to \infty} \normw{\tilde{\theta} - \theta_*}{2} = 0} = 1$.
\end{theorem}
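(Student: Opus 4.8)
The plan is to track how the conditional Gaussians of \cref{thm:linear posterior} behave as $N \to \infty$ and to show that each latent $S_t$ collapses onto the scaled parameter $\sqrt{\bar{\alpha}_t}\,\theta_*$, so that the output $S_0$ collapses onto $\theta_*$. First I would reparametrize the sampling in \cref{alg:diffusion posterior sampling} by writing $S_T = \hat{\mu}_{T+1}(h) + \hat{\Sigma}_{T+1}(h)^{1/2} Z_T$ and $S_{t-1} = \hat{\mu}_t(S_t, h) + \hat{\Sigma}_t(h)^{1/2} Z_{t-1}$ for a fixed collection of $T+1$ standard Gaussian vectors $Z_T, \dots, Z_0$ shared across all $N$. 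Since $T$ is fixed and these noise vectors are a.s. finite, the almost sure claim reduces to showing, for a.e. realization of the observation noise and of $Z_T, \dots, Z_0$, that the resulting deterministic recursion converges.

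The only genuinely probabilistic ingredient is the consistency of the empirical evidence. Using $y_\ell = \phi_\ell\T \theta_* + \varepsilon_\ell$ and the definitions in \eqref{eq:linear posterior}, a short computation gives $\bar{\theta} - \theta_* = \bar{\Sigma}\,\sigma^{-2} \sum_{\ell = 1}^N \phi_\ell \varepsilon_\ell$. I would establish $\bar{\theta} \to \theta_*$ a.s. by combining the hypothesis $\lambda_d(\bar{\Sigma}^{-1}) \to \infty$ with a strong law for the martingale $\sum_\ell \phi_\ell \varepsilon_\ell$ driven by zero-mean sub-Gaussian noise: bounding $\normw{\bar{\theta} - \theta_*}{2}^2 \le \lambda_d(\bar{\Sigma}^{-1})^{-1}$ times a self-normalized noise term and invoking the sub-Gaussian tail yields the convergence. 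I expect this to be the main obstacle, since the minimum-eigenvalue growth controls only the signal; one must additionally ensure the self-normalized noise does not grow as fast, which relies on the martingale and sub-Gaussian structure (and, in full generality, a mild Lai--Wei-type regularity on how the eigenvalues grow).

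Given $\bar{\theta} \to \theta_*$, the remaining steps are deterministic matrix algebra. Because $\lambda_d(\bar{\Sigma}^{-1}) \to \infty$, both $\hat{\Sigma}_{T+1}(h) = (I_d + \bar{\Sigma}^{-1}/\bar{\alpha}_T)^{-1}$ and $\hat{\Sigma}_t(h) = (\Sigma_t^{-1} + \bar{\Sigma}^{-1}/\bar{\alpha}_{t-1})^{-1}$ tend to $0$, so the noise contributions $\hat{\Sigma}_{T+1}(h)^{1/2} Z_T$ and $\hat{\Sigma}_t(h)^{1/2} Z_{t-1}$ vanish. For the means, I would prove the identity $(\Sigma_t^{-1} + \bar{\Sigma}^{-1}/\bar{\alpha}_{t-1})^{-1} \bar{\Sigma}^{-1} \to \bar{\alpha}_{t-1} I_d$ (and its $T+1$ analogue with $\Sigma_t^{-1}$ replaced by $I_d$), which gives $\hat{\mu}_{T+1}(h) \to \sqrt{\bar{\alpha}_T}\,\theta_*$, while the prior term $\hat{\Sigma}_t(h) \Sigma_t^{-1} \mu_t(S_t) \to 0$ provided $\mu_t$ is continuous.

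Finally I would close the induction down the chain: the base case is $S_T \to \sqrt{\bar{\alpha}_T}\,\theta_*$, and assuming $S_t \to \sqrt{\bar{\alpha}_t}\,\theta_*$, continuity of $\mu_t$ gives $\hat{\mu}_t(S_t, h) \to \sqrt{\bar{\alpha}_{t-1}}\,\theta_*$ and hence $S_{t-1} \to \sqrt{\bar{\alpha}_{t-1}}\,\theta_*$. Since $\bar{\alpha}_0 = 1$, at $t = 1$ this yields $S_0 = \tilde{\theta} \to \theta_*$, which holds on the probability-one event where $\bar{\theta} \to \theta_*$ and $Z_T, \dots, Z_0$ are finite, establishing $\prob{\lim_{N \to \infty} \normw{\tilde{\theta} - \theta_*}{2} = 0} = 1$.
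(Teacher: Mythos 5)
Your proposal follows essentially the same route as the paper's proof: both reduce the claim to (i) consistency of the evidence estimate, $\bar{\theta} \to \theta_*$, via the identity $\bar{\theta} - \theta_* = \sigma^{-2}\bar{\Sigma}\sum_{\ell=1}^N \phi_\ell \varepsilon_\ell$, and (ii) an induction down the reverse chain showing that each conditional covariance $\hat{\Sigma}_t(h)$ vanishes and each mean $\hat{\mu}_t(S_t,h)$ collapses onto $\sqrt{\bar{\alpha}_{t-1}}\,\bar{\theta}$ (equivalently, onto $\sqrt{\bar{\alpha}_{t-1}}\,\theta_*$ once (i) is in hand), with $\bar{\alpha}_0 = 1$ closing the argument at $S_0 = \tilde{\theta}$. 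Your additions — the explicit coupling via fixed Gaussian vectors $Z_T,\dots,Z_0$ to make the almost-sure statement well-posed across $N$, the martingale/sub-Gaussian strong law for step (i), and the explicit continuity assumption on $\mu_t$ — tighten steps that the paper deliberately treats loosely (it invokes Gaussianity of $\bar{\theta}-\theta_*$ with covariance $\bar{\Sigma}\to 0$ and notes that finite-time errors are neglected in the asymptotic argument), but they do not change the approach.
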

\begin{proof}
The proof is in \cref{sec:asymptotic consistency proof}. The key idea is that the conditional posteriors in \eqref{eq:posterior T} and \eqref{eq:posterior t - 1} concentrate at a scaled unknown model parameter $\theta_*$ as the number of observations increases, which we formalize as $\lambda_d(\bar{\Sigma}^{-1}) \to \infty$.
\end{proof}

The bound in \cref{thm:asymptotic consistency} can be interpreted as follows. The sampled parameter $\tilde{\theta}$ approaches the true unknown parameter $\theta_*$ as the number of observations $N$ increases. To guarantee that the posterior shrinks uniformly in all directions, we assume that the number of observations in all directions grows linearly with $N$. This is akin to assuming that $\lambda_d(\bar{\Sigma}^{-1}) = \Omega(N)$. This lower bound can be attained in linear models by getting observations according to the D-optimal design \citep{pukelsheim06optimal}.

\subsection{GLM Posterior}
\label{sec:glm posterior}

The Laplace approximation in GLMs (\cref{sec:glm}) naturally generalizes the exact posterior distribution in linear models (\cref{sec:linear model}). We generalize \cref{thm:linear posterior} to GLMs along the same lines.

\begin{theorem}
\label{thm:glm posterior} Let $p$ be a probability measure over the reverse process (\cref{fig:diffusion model}). Suppose that \eqref{eq:integral approximation} holds for all $t \in [T]$. Then $p(s_T \mid h) \propto \cN(s_T; \hat{\mu}_{T + 1}(h), \hat{\Sigma}_{T + 1}(h))$, where
\begin{align*}
  \hat{\mu}_{T + 1}(h)
  = \sqrt{\bar{\alpha}_T} \dot{\theta}_{T + 1}\,, \quad
  \hat{\Sigma}_{T + 1}(h)
  = \bar{\alpha}_T \dot{\Sigma}_{T + 1}\,, \quad
  \dot{\theta}_{T + 1}, \dot{\Sigma}_{T + 1}
  \gets {\irls}(\mathbf{0}_d, I_d / \bar{\alpha}_T, h)\,.
\end{align*}
For $t \in [T]$, we have $p(s_{t - 1} \mid s_t, h) \propto \cN(s_{t - 1}; \hat{\mu}_t(s_t, h), \hat{\Sigma}_t(h))$, where
\begin{align*}
  \hat{\mu}_t(s_t, h)
  = \sqrt{\bar{\alpha}_{t - 1}} \dot{\theta}_t\,, \quad
  \hat{\Sigma}_t(h)
  = \bar{\alpha}_{t - 1} \dot{\Sigma}_t\,, \quad
  \dot{\theta}_t, \dot{\Sigma}_t
  \gets {\irls}(\mu_t(s_t) / \sqrt{\bar{\alpha}_{t - 1}},
  \Sigma_t / \bar{\alpha}_{t - 1}, h)\,.
\end{align*}
\end{theorem}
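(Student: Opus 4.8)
The plan is to follow the same four-step strategy as in \cref{thm:linear posterior}, but to replace its third and fourth steps (the exact Gaussian product) by a Laplace approximation, since in a GLM the likelihood $p(h \mid \theta)$ of \cref{ass:glm likelihood} is no longer Gaussian in $\theta$. Fixing a stage, I first invoke \cref{lem:chain model posterior} together with the integral approximation \eqref{eq:integral approximation} to write each conditional posterior as a product of a reparameterized likelihood and a Gaussian prior factor. For the terminal stage this reads $p(s_T \mid h) \approx p(h \mid s_T / \sqrt{\bar{\alpha}_T}) \, \cN(s_T; \mathbf{0}_d, I_d)$, and for $t \in [T] \setminus \set{1}$ it reads $p(s_{t - 1} \mid s_t, h) \approx p(h \mid s_{t - 1} / \sqrt{\bar{\alpha}_{t - 1}}) \, \cN(s_{t - 1}; \mu_t(s_t), \Sigma_t)$; the base case $t = 1$ is already exact in \cref{lem:chain model posterior}, and is recovered from the general formula under the convention $\bar{\alpha}_0 = 1$.

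Next I perform the change of variables $\theta = s / \sqrt{\bar{\alpha}}$ that undoes the $\sqrt{\bar{\alpha}}$ scaling inside the likelihood, where $s$ and $\bar{\alpha}$ stand for the relevant stage quantities. The associated Jacobian is a constant, so it is absorbed into the proportionality and affects neither the mode nor the Hessian of the log-posterior. Under this affine map a Gaussian prior factor $\cN(s; m, C)$ becomes, as a function of $\theta$, proportional to $\cN(\theta; m / \sqrt{\bar{\alpha}}, C / \bar{\alpha})$. Hence in the $\theta$ parameterization each conditional posterior has exactly the form $p(h \mid \theta) \, \cN(\theta; \theta_0, \Sigma_0)$, i.e., a GLM posterior with a Gaussian prior whose parameters are $(\theta_0, \Sigma_0) = (\mathbf{0}_d, I_d / \bar{\alpha}_T)$ for the terminal stage and $(\mu_t(s_t) / \sqrt{\bar{\alpha}_{t - 1}}, \Sigma_t / \bar{\alpha}_{t - 1})$ otherwise. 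By \cref{sec:glm}, the Laplace approximation of this posterior is $\cN(\dot{\theta}, \dot{\Sigma})$ with $(\dot{\theta}, \dot{\Sigma}) \gets \irls(\theta_0, \Sigma_0, h)$, which matches the IRLS calls in the statement.

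Finally I transform back to $s = \sqrt{\bar{\alpha}} \, \theta$. The crux is that the Laplace approximation commutes with an affine reparameterization: if the negative log-posterior in $\theta$ is minimized at $\dot{\theta}$ with Hessian $\dot{\Sigma}^{-1}$ there, then in the $s$ variable the minimizer sits at $\sqrt{\bar{\alpha}} \, \dot{\theta}$ and, by the chain rule, the Hessian rescales to $\dot{\Sigma}^{-1} / \bar{\alpha}$, so the Laplace covariance becomes $\bar{\alpha} \, \dot{\Sigma}$. Equivalently, pushing $\cN(\dot{\theta}, \dot{\Sigma})$ through the linear map $\theta \mapsto \sqrt{\bar{\alpha}} \, \theta$ yields $\cN(\sqrt{\bar{\alpha}} \, \dot{\theta}, \bar{\alpha} \, \dot{\Sigma})$, which is precisely the claimed $(\hat{\mu}, \hat{\Sigma})$ at each stage. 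I expect this commutation step to be the only genuine subtlety; everything else is bookkeeping of the $\bar{\alpha}$ factors, and the $t = 1$ case falls out of the general formula upon setting $\bar{\alpha}_0 = 1$, so that no reparameterization is needed and $\hat{\mu}_1 = \dot{\theta}_1$, $\hat{\Sigma}_1 = \dot{\Sigma}_1$.
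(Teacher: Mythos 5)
Your proposal is correct and follows essentially the same route as the paper's proof: apply \eqref{eq:integral approximation}, rescale by $\gamma = 1/\sqrt{\bar{\alpha}_{t-1}}$ so that each conditional posterior becomes a standard GLM posterior $p(h \mid \theta)\,\cN(\theta; \theta_0, \Sigma_0)$, invoke the Laplace/\irls approximation from \cref{sec:glm}, and map the resulting Gaussian back through $\theta \mapsto \sqrt{\bar{\alpha}_{t-1}}\,\theta$ to obtain $\cN(\sqrt{\bar{\alpha}_{t-1}}\,\dot{\theta}_t, \bar{\alpha}_{t-1}\dot{\Sigma}_t)$. Your explicit argument that the Laplace approximation commutes with the affine reparameterization (mode and Hessian transform consistently), and your handling of the $t = 1$ case via the convention $\bar{\alpha}_0 = 1$, are details the paper leaves implicit, but the core argument is identical.
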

\begin{proof}
The proof is in \cref{sec:glm posterior proof}. It has four steps. First, we fix stage $t$ and apply approximation \eqref{eq:integral approximation}. Second, we reparameterize the prior, from a function of $s_t$ to a function of $s_t / \sqrt{\bar{\alpha}_t}$. Third, we combine the likelihood with the prior using the Laplace approximation. Finally, we repameterize the posterior, from a function of $s_t / \sqrt{\bar{\alpha}_t}$ to a function of $s_t$.
\end{proof}

Similarly to \cref{thm:linear posterior}, the distributions in \cref{thm:glm posterior} mix evidence with the diffusion model prior. However, this is done implicitly in \irls. The posterior can be sampled from using \laplacedps, where the mean and covariances would be taken from \cref{thm:glm posterior}. Note that \cref{thm:linear posterior} is a special case of \cref{thm:glm posterior} where the mean function $g$ is an identity.

\section{Application to Contextual Bandits}
\label{sec:bandits}

\begin{algorithm}[t]
  \caption{Contextual Thompson sampling.}
  \label{alg:ts}
  \begin{algorithmic}[1]
    \For{round $k = 1, \dots, n$}
      \State Sample $\tilde{\theta}_k \sim p(\cdot \mid h_k)$, where $p(\cdot \mid h_k)$ is the posterior distribution in \eqref{eq:posterior distribution}
      \State Take action $a_k \gets \argmax_{a \in \cA} r(x_k, a; \tilde{\theta}_k)$ and observe reward $y_k$
    \EndFor
  \end{algorithmic}
\end{algorithm}

Now we apply our posterior sampling approximations (\cref{sec:posterior sampling}) to contextual bandits. A \emph{contextual bandit} \citep{langford08epochgreedy,li10contextual} is a classic model for sequential decision making under uncertainty where the agent takes actions conditioned on context. We denote the \emph{action set} by $\cA$ and the \emph{context set} by $\cX$. The \emph{mean reward} for taking action $a \in \cA$ in context $x \in \cX$ is $r(x, a; \theta_*)$, where $r: \cX \times \cA \times \Theta \to \realset$ is a \emph{reward function} and $\theta_* \in \Theta$ is a \emph{model parameter} (\cref{sec:setting}). The agent interacts with the bandit for $n$ rounds indexed by $k \in [n]$. In round $k$, it observes a \emph{context} $x_k \in \cX$, takes an \emph{action} $a_k \in \cA$, and observes its \emph{stochastic reward} $y_k = r(x_k, a_k; \theta_*) + \varepsilon_k$ with independent noise $\varepsilon_k$. We assume that the noise is zero-mean $\sigma^2$-sub-Gaussian for $\sigma > 0$. The objective of the agent is to maximize its cumulative reward in $n$ rounds, or equivalently to minimize its cumulative regret. We define the \emph{$n$-round regret} as
\begin{align}
  \textstyle
  R(n)
  = \sum_{k = 1}^n \E{r(x_k, a_{k, *}; \theta_*) - r(x_k, a_k; \theta_*)}\,,
  \label{eq:regret}
\end{align}
where $a_{k, *} = \argmax_{a \in \cA} r(x_k, a; \theta_*)$ is the optimal action in round $k$.

Arguably the most popular method for solving contextual bandit problems is Thompson sampling \citep{thompson33likelihood,chapelle11empirical,agrawal13thompson}. The key idea in TS is to use the posterior distribution of $\theta_*$ to explore. This is done as follows. In round $k$, the model parameter is drawn from the posterior in \eqref{eq:posterior distribution}, $\tilde{\theta}_k \sim p(\cdot \mid h_k)$, where $h_k$ is the \emph{history} of all interactions up to round $k$. After that, the agent takes the action with the highest mean reward under $\tilde{\theta}_k$. The pseudo-code of this algorithm is given in \cref{alg:ts}.

A \emph{linear bandit} \citep{dani08stochastic} has a linear reward function $r(x, a; \theta_*) = \phi(x, a)\T \theta_*$, where $\phi: \cX \times \cA \to \realset^d$ is a \emph{feature extractor}. The feature extractor can be non-linear in $x$ and $a$. Therefore, linear bandits can be applied to non-linear functions of $x$ and $a$. The feature extractor can be either learned \citep{riquelme18deep} or hand-crafted. We denote the feature vector of the action in round $k$ by $\phi_k = \phi(x_k, a_k)$. Therefore, the \emph{history} of interactions up to round $k$ is $h_k = \set{(\phi_\ell, y_\ell)}_{\ell \in [k - 1]}$. When the prior distribution is a Gaussian, $p(\theta_*) = \cN(\theta_*; \theta_0, \Sigma_0)$, the posterior in round $k$ is a Gaussian in \eqref{eq:linear posterior} for $h = h_k$. When the prior is a diffusion model, we propose sampling from the posterior using
\begin{align}
  \tilde{\theta}_k
  \gets {\laplacedps}((\mu_t, \Sigma_t)_{t \in [T]}, h_k)\,,
  \label{eq:diffusion posterior sampling}
\end{align}
where $\hat{\mu}_t$ and $\hat{\Sigma}_t$ in \laplacedps are computed as in \cref{thm:linear posterior}. We call this algorithm \diffts.

A \emph{generalized linear bandit} \citep{filippi10parametric,jun17scalable,li17provably,kveton20randomized} is an extension of linear bandits to generalized linear models (\cref{sec:glm}). When $p(\theta_*) = \cN(\theta_*; \theta_0, \Sigma_0)$, the Laplace approximation to the posterior is a Gaussian (\cref{sec:glm}). When the prior is a diffusion model, we propose posterior sampling using \eqref{eq:diffusion posterior sampling}, where $\hat{\mu}_t$ and $\hat{\Sigma}_t$ in \laplacedps are computed as in \cref{thm:glm posterior}.

\section{Experiments}
\label{sec:experiments}

\begin{figure}[t]
  \centering
  \includegraphics[width=5.4in]{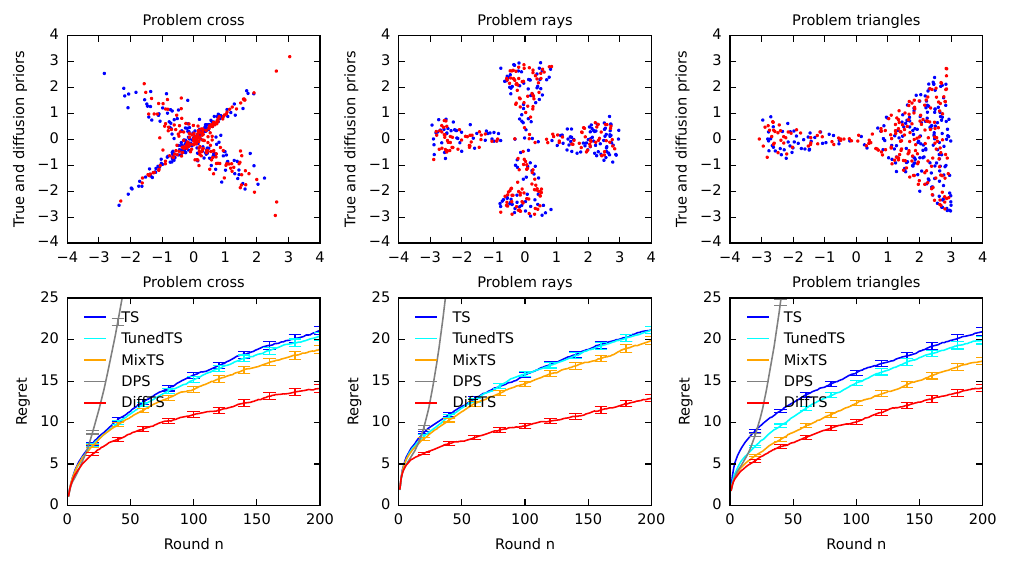}
  \vspace{-0.1in}
  \caption{Evaluation of \diffts on three synthetic problems. The first row shows samples from the true (blue) and diffusion model (red) priors. The second row shows the regret of \diffts and the baselines as a function of round $n$.}
  \label{fig:synthetic experiment}
\end{figure}

We conduct three experiments: synthetic problems in $2$ dimensions (\cref{sec:synthetic experiment,sec:additional synthetic problems}), a recommender system (\cref{sec:movielens experiment}), and a classification problem (\cref{sec:mnist experiment}). In addition, we conduct an ablation study in \cref{sec:ablation studies}, where we vary the number of training samples for the diffusion prior and the number of diffusion stages $T$.

\subsection{Experimental Setup}
\label{sec:experimental setup}

We have four baselines. Three baselines are variants of contextual Thompson sampling \citep{chapelle11empirical,agrawal13thompson}: with an uninformative Gaussian prior (\ts), a learned Gaussian prior (\tunedts), and a learned Gaussian mixture prior (\mixts) \citep{hong22thompson}. The last baseline is diffusion posterior sampling (\dps) of \citet{chung23diffusion}. We implement all TS baselines as described in \cref{sec:bandits}. The uninformative prior is $\cN(\mathbf{0}_d, I_d)$. \mixts is used only in linear bandit experiments because the logistic regression variant does not exist. The TS baselines are chosen to cover various levels of prior information. Our implementation of \dps is described in \cref{sec:dps}. We also experimented with frequentist baselines, such as \linucb \citep{abbasi-yadkori11improved} and the $\varepsilon$-greedy policy. They performed worse than \ts and thus we do not report them.

Each experiment is set up as follows. First, the prior distribution of $\theta_*$ is specified: it can be synthetic or estimated from real-world data. Second, we learn this distribution from $10\,000$ samples from it. In \diffts and \dps, we follow \cref{sec:learning reverse process}. The number of stages is $T = 100$ and the diffusion factor is $\alpha_t = 0.97$. Since $0.97^{100} \approx 0.05$, most of the information in the training samples is diffused. The regressor in \cref{sec:learning reverse process} is a $2$-layer neural network with ReLU activations. In \tunedts, we fit the mean and covariance using maximum likelihood estimation. In \mixts, we fit the Gaussian mixture using \textsc{scikit-learn}. All algorithms are evaluated on $\theta_*$ sampled from the true prior. The regret is computed as defined in \eqref{eq:regret}. All error bars are standard errors of the estimates.

\subsection{Synthetic Experiment}
\label{sec:synthetic experiment}

The first experiment is on three synthetic problems. Each problem is a linear bandit (\cref{sec:bandits}) with $K = 100$ actions in $d = 2$ dimensions. The reward noise is $\sigma = 1$. The feature vectors of actions are sampled uniformly at random from a unit ball. The prior distributions of $\theta_*$ are shown in \cref{fig:synthetic experiment}. The first is a mixture of two Gaussians and the last can be approximated well by a mixture of two Gaussians. We implement \mixts with two mixture components. Therefore, it can represent the first prior exactly and approximate the last one well.

Our results are reported in \cref{fig:synthetic experiment}. We observe two main trends. First, samples from the diffusion prior closely resemble those from the true prior. In such cases, \diffts is expected to perform well and even outperforms \mixts, because it has a better representation of the prior. We observe this in all problems. Second, \dps diverges as the number of rounds increases. This is because \dps uses an approximation based on the likelihood score (\cref{sec:related work}), which is unstable when the score is high. This happens despite our best efforts to tune \dps (\cref{sec:dps}). We report results on another three synthetic problems in \cref{sec:additional synthetic problems}.

\diffts should be $T$ times more computationally costly than \ts with a Gaussian prior (\cref{sec:linear posterior}). We observe this empirically. As an example, the average cost of $100$ runs of \diffts on any problem in \cref{fig:synthetic experiment} is $12$ seconds. The average cost of \ts is $0.1$ seconds. The computation and accuracy can be traded off, and we investigate this in \cref{sec:ablation studies}. In the cross problem, we vary the number of diffusion stages from $T = 1$ to $T = 300$. We observe that the computational cost is linear in $T$ and the regret drops quickly from $26$ at $T = 1$ to $15$ at $T = 50$.

\subsection{MovieLens Experiment}
\label{sec:movielens experiment}

\begin{figure}[t]
  \centering
  \includegraphics[width=5.4in]{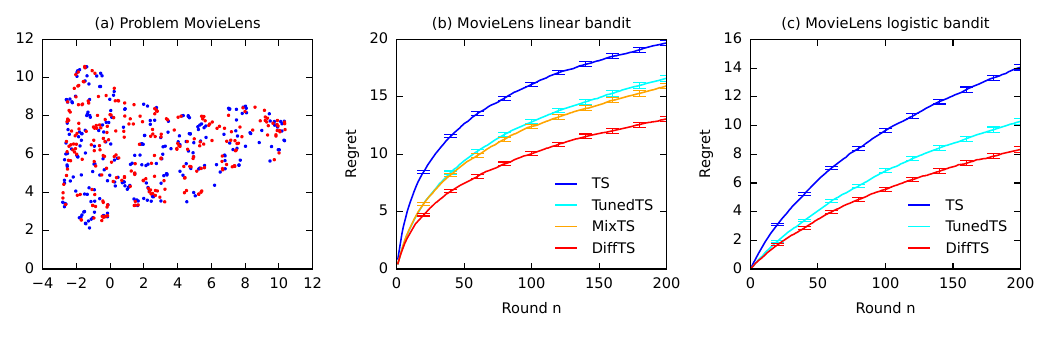}
  \vspace{-0.15in}
  \caption{Evaluation of \diffts on the MovieLens dataset: (a) shows samples from the true (blue) and diffusion model (red) priors, (b) shows regret in the linear bandit, and (c) shows regret in the logistic bandit.}
  \label{fig:movielens experiment}
\end{figure}

In the second experiment, we learn to recommend an item to randomly arriving users. The problem is simulated using the MovieLens 1M dataset \citep{movielens}, with one million ratings for $3\,706$ movies from $6\,040$ users. We subtract the mean rating from all ratings and complete the sparse rating matrix $M$ by alternating least squares \citep{davenport16overview} with rank $d = 5$. The learned factorization is $M = U V\T$. The $i$-th row of $U$, denoted by $U_i$, represents user $i$. The $j$-th row of $V$, denoted by $V_j$, represents movie $j$. We use movie embeddings $V_j$ as model parameters and user embeddings $U_i$ as features of the actions. The movies are items.

We experiment with both linear and logistic bandits. In both, an item is initially chosen randomly from $V_j$ and $K = 10$ actions are chosen randomly from $U_i$ in each round. In the linear bandit, the mean reward of item $j$ for user $i$ is $U_i\T V_j$. The reward noise is $\sigma = 0.75$, and we estimate it from data. In the logistic bandit, the mean reward is $g(U_i\T V_j)$, where $g$ is a sigmoid.

Our MovieLens results are reported in \cref{fig:movielens experiment} and we observe similar trends to \cref{sec:synthetic experiment}. First, samples from the diffusion prior closely resemble those from the true prior (\cref{fig:movielens experiment}a). Since the problem is higher dimensional, we visualize the overlap using UMAP \citep{sainburg21parametric}. Second, \diffts has a lower regret than all baselines, in both linear (\cref{fig:movielens experiment}b) and logistic (\cref{fig:movielens experiment}c) bandits. Finally, \mixts barely outperforms \tunedts. We observe this trend consistently in higher dimensions, and this motivated our work on online learning with more complex priors.

\section{Related Work}
\label{sec:related work}

We start with reviewing related works on bandits with diffusion models. \citet{hsieh23thompson} proposed Thompson sampling with a diffusion model prior for $K$-armed bandits. There are multiple technical differences from our work. First, the diffusion model in \citet{hsieh23thompson} is over scalars representing individual arms. Our model is over vectors representing model parameters, and thus can be applied to contextual bandits. Second, the approximations are different. In stage $t$, \citet{hsieh23thompson} sample from two distributions: the conditional prior and the distribution of the diffused empirical mean up to stage $t$. Then they take a weighted sum of the samples. We sample only once, from the posterior distribution that combines the conditional prior in stage $t$ and likelihood. Therefore, the method of \citet{hsieh23thompson} can be viewed as a non-contextual variant of our method, where posterior sampling is done by weighting samples from the prior and empirical distributions. Finally, \citet{hsieh23thompson} do not analyze their approximation.

\citet{aouali23linear} proposed and analyzed contextual bandits with a linear diffusion model prior: $\mu_t(s_t)$ in \eqref{eq:reverse process} is linear in $s_t$ and $q(s_0)$ is a Gaussian. Therefore, this model is a linear Gaussian model and not a general diffusion model, as in our work.

The closest related work on posterior sampling in diffusion models is \dps of \citet{chung23diffusion}. The key idea in \dps is to sample from the posterior distribution using the likelihood score $\nabla \log p(h \mid \theta)$, where $p(h \mid \theta)$ is the likelihood (\cref{ass:linear model likelihood,ass:glm likelihood}). Note that $\nabla \log p(h \mid \theta)$ grows linearly in $N$ because the history $h$ in $p(h \mid \theta)$ involves $N$ terms. Therefore, \dps becomes unstable as $N \to \infty$. We show it empirically in \cref{sec:synthetic experiment} and discuss the implementation of \dps in \cref{sec:dps}, which was tuned to improve its stability.

Many other posterior sampling methods for diffusion models have been proposed recently: a sequential Monte Carlo approximation for the conditional reverse process \citep{wu23practical}, a variant of \dps with an uninformative prior \citep{meng23diffusion}, a pseudo-inverse approximation to the likelihood of evidence \citep{song23pseudoinverseguided}, and posterior sampling in latent diffusion models \citep{rout23solving}. All of these methods rely on the likelihood score $\nabla \log p(h \mid \theta)$ and thus become unstable as the number of observations $N$ increases. Our posterior approximations do not have this issue because they are based on the product of prior and evidence distributions (\cref{thm:linear posterior,thm:glm posterior}), and thus gradient-free. They work well across different levels of uncertainty (\cref{sec:experiments}) and do not require tuning.

We note that posterior sampling is a special form of inference-time guidance in diffusion models. Other approaches are conditional pre-training \citep{dhariwal21diffusion}, a constraint in the reverse process \citep{graikos22diffusion}, refining the null-space content \citep{wang23zeroshot}, solving an optimization problem that pushes the reverse process towards evidence \citep{song24solving}, and aligning the reverse process with the prompt \citep{bansal24universal}.

\section{Conclusions}
\label{sec:conclusions}

We propose posterior sampling approximations for diffusion models priors. These approximations are contextual, and can be implemented efficiently in linear models and GLMs. We analyze them and evaluate them empirically on contextual bandit problems. Our method has two main limitations.

\textbf{Computational cost.} The cost of posterior sampling in \laplacedps with $T$ stages is about $T$ times higher than that of posterior sampling with a Gaussian prior (\cref{sec:setting}). We validate it empirically in \cref{sec:synthetic experiment}. We plot the sampling time as a function of $T$ in \cref{fig:ablation}c (\cref{sec:ablation studies}).

\textbf{Learning cost and hyper-parameter tuning.} In all experiments, the number of diffusion stages is $T = 100$ and the diffusion rate is set such that most of the signal diffuses. The regressor is a $2$-layer neural network and we learn it from $10\,000$ samples from the prior. These settings resulted in stable performance in all our experiments (\cref{sec:experiments}). However, they clearly impact the performance. We plot the regret as a function of the number of training samples in \cref{fig:ablation}a and as a function of $T$ in \cref{fig:ablation}b. When $T$ or the number of training samples is small, \diffts performs very similarly to posterior sampling with a Gaussian prior. In summary, there is no benefit in these cases.

\textbf{Future work.} We develop novel posterior approximations rather than bounding their regret. This is because the existing approximations are unstable and may diverge in the online setting (\cref{sec:related work,sec:synthetic experiment}). We believe that a proper regret analysis of \diffts is possible and would require bounding two errors. The first error arises because the reverse process does not reverse the forward process exactly (\cref{sec:learning reverse process}). The second error arises because our posterior distributions are approximate (\cref{sec:key approximation}). One possibility is to start with prior works that already showed the utility of complex priors. For instance, \citet{russo16information} proved a $O(\sqrt{\Gamma H(A_*) n})$ regret bound for a linear bandit, where $\Gamma$ is the maximum ratio of regret to information gain and $H(A_*)$ is the entropy of the distribution of the optimal action under the prior. This bound holds for any prior and says that a lower entropy $H(A_*)$, which corresponds to more informative priors, yields a lower regret.

We also believe that our ideas can be extended beyond GLMs. The key idea in \cref{sec:glm posterior} is to use the Laplace approximation of the likelihood. This approximation can be computed exactly in GLMs. More generally though, it is a good approximation whenever the likelihood can be approximated well by a single Gaussian distribution. By the central limit theorem, under appropriate assumptions, this is expected for any observation model when the number of observations is large.

\bibliographystyle{plainnat}
\bibliography{References}

\clearpage
\onecolumn
\appendix

\section{Proofs and Supporting Lemmas}
\label{sec:proofs}

This section contains proofs of our main claims and supporting lemmas.

\subsection{Proof of \cref{lem:chain model posterior}}
\label{sec:chain model posterior proof}

All derivations are based on basic rules of probability and the chain structure in \cref{fig:diffusion model}, and are exact. From \cref{fig:diffusion model}, the joint probability distribution conditioned on $H = h$ factors as
\begin{align*}
  p(s_{0 : T} \mid h)
  = p(s_T \mid h) \prod_{t = 1}^T p(s_{t - 1} \mid s_{t : T}, h)
  = p(s_T \mid h) \prod_{t = 1}^T p(s_{t - 1} \mid s_t, h)\,.
\end{align*}
We use that $p(s_{t - 1} \mid s_{t : T}, h) = p(s_{t - 1} \mid s_t, h)$ in the last equality. We consider two cases.

\noindent \textbf{Derivation of $p(s_{t - 1} \mid s_t, h)$.} By Bayes' rule, we get
\begin{align*}
  p(s_{t - 1} \mid s_t, h)
  = \frac{p(h \mid s_{t - 1}, s_t) \, p(s_{t - 1} \mid s_t)}{p(h \mid s_t)}
  \propto p(h \mid s_{t - 1}) \, p(s_{t - 1} \mid s_t)\,.
\end{align*}
In the last step, we use that $p(h \mid s_t)$ is a constant, since $s_t$ and $h$ are fixed, and that $p(h \mid s_{t - 1}, s_t) = p(h \mid s_{t - 1})$. Note that the last term $p(s_{t - 1} \mid s_t)$ is the conditional prior distribution. When $t > 1$, the first term can be expressed as
\begin{align*}
  p(h \mid s_{t - 1})
  & = \int_{s_0} p(h, s_0 \mid s_{t - 1}) \dif s_0
  = \int_{s_0} p(h \mid s_0, s_{t - 1}) \, p(s_0 \mid s_{t - 1}) \dif s_0 \\
  & = \int_{s_0} p(h \mid s_0) \, p(s_0 \mid s_{t - 1}) \dif s_0\,.
\end{align*}
In the last equality, we use that our graphical model is a chain (\cref{fig:diffusion model}), and thus $p(h \mid s_0, s_{t - 1}) = p(h \mid s_0)$. Finally, we chain all identities and get that
\begin{align}
  p(s_{t - 1} \mid s_t, h)
  \propto \int_{s_0} p(h \mid s_0) \, p(s_0 \mid s_{t - 1}) \dif s_0 \,
  p(s_{t - 1} \mid s_t)\,.
  \label{eq:conditional posterior}
\end{align}

\noindent \textbf{Derivation of $p(s_T \mid h)$.} By Bayes' rule, we get
\begin{align*}
  p(s_T \mid h)
  = \frac{p(h \mid s_T) \, p(s_T)}{p(h)}
  \propto p(h \mid s_T) \, p(s_T)\,.
\end{align*}
In the last step, we use that $p(h)$ is a constant, since $h$ is fixed. The first term can be rewritten as
\begin{align*}
  p(h \mid s_T)
  & = \int_{s_0} p(h, s_0 \mid s_T) \dif s_0
  = \int_{s_0} p(h \mid s_0, s_T) \, p(s_0 \mid s_T) \dif s_0 \\
  & = \int_{s_0} p(h \mid s_0) \, p(s_0 \mid s_T) \dif s_0\,.
\end{align*}
Finally, we chain all identities and get that
\begin{align}
  p(s_T \mid h)
  \propto \int_{s_0} p(h \mid s_0) \, p(s_0 \mid s_T) \dif s_0 \, p(s_T)\,.
  \label{eq:initial posterior}
\end{align}
This completes the derivations.

\subsection{Proof of \cref{thm:linear posterior}}
\label{sec:linear posterior proof}

This proof has two parts.

\noindent \textbf{Derivation of $p(s_{t - 1} \mid s_t, h)$.} From \eqref{eq:integral approximation} and \cref{ass:linear model likelihood}, it follows that
\begin{align*}
  \int_{s_0} p(h \mid s_0) \, p(s_0 \mid s_{t - 1}) \dif s_0
  & \propto p(h \mid s_{t - 1} / \sqrt{\bar{\alpha}_{t - 1}})
  \propto \cN(s_{t - 1} / \sqrt{\bar{\alpha}_{t - 1}}; \bar{\theta}, \bar{\Sigma}) \\
  & \propto \cN(s_{t - 1}; \sqrt{\bar{\alpha}_{t - 1}} \bar{\theta},
  \bar{\alpha}_{t - 1} \bar{\Sigma})\,.
\end{align*}
The last step treats $\bar{\alpha}_{t - 1}$ and $\bar{\Sigma}$ as constants, because the forward process and evidence $h$ are fixed. Now we apply \cref{lem:gaussian product} to distributions
\begin{align*}
  p(s_{t - 1} \mid s_t)
  = \cN(s_{t - 1}; \mu_t(s_t), \Sigma_t)\,, \quad
  \cN(s_{t - 1}; \sqrt{\bar{\alpha}_{t - 1}} \bar{\theta},
  \bar{\alpha}_{t - 1} \bar{\Sigma})\,,
\end{align*}
and get that
\begin{align*}
  p(s_{t - 1} \mid s_t, h)
  \propto \cN(s_{t - 1}; \hat{\mu}_t(s_t, h), \hat{\Sigma}_t(h))\,,
\end{align*}
where $\hat{\mu}_t(s_t, h)$ and $\hat{\Sigma}_t(h)$ are defined in the claim. This is a product of two Gaussians: the prior with mean $\mu_t(s_t)$ and covariance $\Sigma_t$, and the evidence with mean $\sqrt{\bar{\alpha}_{t - 1}} \bar{\theta}$ and covariance $\bar{\alpha}_{t - 1} \bar{\Sigma}$.

\noindent \textbf{Derivation of $p(s_T \mid h)$.} Analogously to the derivation of $p(s_{t - 1} \mid s_t, h)$, we establish that
\begin{align*}
  \int_{s_0} p(h \mid s_0) \, p(s_0 \mid s_T) \dif s_0
  \propto \cN(s_T; \sqrt{\bar{\alpha}_T} \bar{\theta},
  \bar{\alpha}_T \bar{\Sigma})\,.
\end{align*}
Then we apply \cref{lem:gaussian product} to distributions
\begin{align*}
  p(s_T)
  = \cN(s_T; \mathbf{0}_d, I_d)\,, \quad
  \cN(s_T; \sqrt{\bar{\alpha}_T} \bar{\theta},
  \bar{\alpha}_T \bar{\Sigma})\,,
\end{align*}
and get that
\begin{align*}
  p(s_T \mid h)
  \propto \cN(s_T; \hat{\mu}_{T + 1}(h), \hat{\Sigma}_{T + 1}(h))\,,
\end{align*}
where $\hat{\mu}_{T + 1}(h)$ and $\hat{\Sigma}_{T + 1}(h)$ are defined in the claim. This is a product of two Gaussians: the prior with mean $\mathbf{0}_d$ and covariance $I_d$, and the evidence with mean $\sqrt{\bar{\alpha}_T} \bar{\theta}$ and covariance $\bar{\alpha}_T \bar{\Sigma}$.

\subsection{Proof of \cref{thm:asymptotic consistency}}
\label{sec:asymptotic consistency proof}

We start with the triangle inequality
\begin{align*}
  \normw{\tilde{\theta} - \theta_*}{2}
  = \normw{\tilde{\theta} - \bar{\theta} + \bar{\theta} - \theta_*}{2}
  \leq \normw{\tilde{\theta} - \bar{\theta}}{2} + \normw{\bar{\theta} - \theta_*}{2}\,,
\end{align*}
where we introduce $\bar{\theta}$ from \cref{sec:linear model}. Now we bound each term on the right-hand side.

\textbf{Upper bound on $\normw{\tilde{\theta} - \bar{\theta}}{2}$.} This part of the proof is based on analyzing the asymptotic behavior of the conditional densities in \cref{thm:linear posterior}.

As a first step, note that $S_T \sim \cN(\hat{\mu}_{T + 1}(h), \hat{\Sigma}_{T + 1}(h))$, where
\begin{align*}
  \hat{\mu}_{T + 1}(h)
  = \hat{\Sigma}_{T + 1}(h) (I_d \, \mathbf{0}_d +
  \bar{\Sigma}^{-1} \bar{\theta} / \sqrt{\bar{\alpha}_T})\,, \quad
  \hat{\Sigma}_{T + 1}(h)
  = (I_d + \bar{\Sigma}^{-1} / \bar{\alpha}_T)^{-1}\,.
\end{align*}
Since $\lambda_d(\bar{\Sigma}^{-1}) \to \infty$, we get
\begin{align*}
  \hat{\Sigma}_{T + 1}(h)
  \to \bar{\alpha}_T \bar{\Sigma}\,, \quad
  \hat{\mu}_{T + 1}(h)
  \to \sqrt{\bar{\alpha}_T} \bar{\theta}\,.
\end{align*}
Moreover, $\lambda_d(\bar{\Sigma}^{-1}) \to \infty$ implies $\lambda_1(\bar{\Sigma}) \to 0$, and thus $\lim_{N \to \infty} \normw{S_T - \sqrt{\bar{\alpha}_T} \bar{\theta}}{2} = 0$.

The same argument can be applied inductively to later stages of the reverse process. Specifically, for any $t \in [T]$, $S_{t - 1} \sim \cN(\hat{\mu}_t(S_t, h), \hat{\Sigma}_t(h))$, where
\begin{align*}
  \hat{\mu}_t(S_t, h)
  = \hat{\Sigma}_t(h) (\Sigma_t^{-1} \mu_t(S_t) +
  \bar{\Sigma}^{-1} \bar{\theta} / \sqrt{\bar{\alpha}_{t - 1}})\,, \quad
  \hat{\Sigma}_t(h)
  = (\Sigma_t^{-1} + \bar{\Sigma}^{-1} / \bar{\alpha}_{t - 1})^{-1}\,.
\end{align*}
Since $\lambda_d(\bar{\Sigma}^{-1}) \to \infty$ and $S_t \to \sqrt{\bar{\alpha}_t} \bar{\theta}$ by induction, we get
\begin{align*}
  \hat{\Sigma}_t(h)
  \to \bar{\alpha}_{t - 1} \bar{\Sigma}\,, \quad
  \hat{\mu}_t(S_t, h)
  \to \sqrt{\bar{\alpha}_{t - 1}} \bar{\theta}\,.
\end{align*}
Moreover, $\lambda_d(\bar{\Sigma}^{-1}) \to \infty$ implies $\lambda_1(\bar{\Sigma}) \to 0$, and thus $\lim_{N \to \infty} \normw{S_{t - 1} - \sqrt{\bar{\alpha}_{t - 1}} \bar{\theta}}{2} = 0$ for any $t \in [T]$. In the last stage, $t = 1$, $\bar{\alpha}_0 = 1$, and $S_0 = \tilde{\theta}$. Therefore,
\begin{align*}
  \lim_{N \to \infty}
  \normw{S_{t - 1} - \sqrt{\bar{\alpha}_{t - 1}} \bar{\theta}}{2}
  = \lim_{N \to \infty} \normw{\tilde{\theta} - \bar{\theta}}{2}
  = 0\,.
\end{align*}

\textbf{Upper bound on $\normw{\bar{\theta} - \theta_*}{2}$.} This part of the proof uses the definition of $\bar{\theta}$ in \cref{sec:linear model} and that $\varepsilon_\ell \sim \cN(0, \sigma^2)$ is independent noise. By definition,
\begin{align*}
  \bar{\theta} - \theta_*
  = \sigma^{-2} \bar{\Sigma} \sum_{\ell = 1}^N \phi_\ell y_\ell - \theta_*
  = \sigma^{-2} \bar{\Sigma} \sum_{\ell = 1}^N
  \phi_\ell (\phi_\ell\T \theta_* + \varepsilon_\ell) - \theta_*
  = \sigma^{-2} \bar{\Sigma} \sum_{\ell = 1}^N \phi_\ell \varepsilon_\ell\,.
\end{align*}
Since $\varepsilon_\ell$ is independent zero-mean Gaussian noise with variance $\sigma^2$, $\bar{\theta} - \theta_*$ is a Gaussian random variable with mean $\mathbf{0}_d$ and covariance
\begin{align*}
  \cov{\sigma^{-2} \bar{\Sigma} \sum_{\ell = 1}^N \phi_\ell \varepsilon_\ell}
  = \sigma^{-4} \bar{\Sigma} \left(\sum_{\ell = 1}^N \phi_\ell
  \var{\varepsilon_\ell} \phi_\ell\T\right) \bar{\Sigma}
  = \bar{\Sigma} \frac{\sum_{\ell = 1}^N \phi_\ell \phi_\ell\T}{\sigma^2} \bar{\Sigma}
  = \bar{\Sigma}\,.
\end{align*}
Since $\lambda_d(\bar{\Sigma}^{-1}) \to \infty$ implies $\lambda_1(\bar{\Sigma}) \to 0$, we get
\begin{align*}
  \lim_{N \to \infty} \normw{\bar{\theta} - \theta_*}{2}
  = 0\,.
\end{align*}
This completes the proof.

\subsection{Proof of \cref{thm:glm posterior}}
\label{sec:glm posterior proof}

This proof has two parts.

\noindent \textbf{Derivation of $p(s_{t - 1} \mid s_t, h)$.} From \eqref{eq:integral approximation}, we have
\begin{align*}
  \int_{s_0} p(h \mid s_0) \, p(s_0 \mid s_{t - 1}) \dif s_0
  \propto p(h \mid s_{t - 1} / \sqrt{\bar{\alpha}_{t - 1}})\,.
\end{align*}
Since $p(s_{t - 1} \mid s_t)$ is a Gaussian, we have
\begin{align*}
  p(s_{t - 1} \mid s_t)
  = \cN(s_{t - 1}; \mu_t(s_t), \Sigma_t)
  \propto \cN(\gamma s_{t - 1}; \gamma \mu_t(s_t), \gamma^2 \Sigma_t)
\end{align*}
for $\gamma = 1 / \sqrt{\bar{\alpha}_{t - 1}}$. Then by the Laplace approximation,
\begin{align*}
  p(h \mid \gamma s_{t - 1}) \, \cN(\gamma s_{t - 1}; \gamma \mu_t(s_t), \gamma^2 \Sigma_t)
  \propto \cN(\gamma s_{t - 1}; \dot{\theta}_t, \dot{\Sigma}_t)
  \propto \cN(s_{t - 1}; \dot{\theta}_t / \gamma, \dot{\Sigma}_t / \gamma^2)\,,
\end{align*}
where $\dot{\theta}_t, \dot{\Sigma}_t \gets {\irls}(\gamma \mu_t(s_t), \gamma^2 \Sigma_t, h)$.

\noindent \textbf{Derivation of $p(s_T \mid h)$.} Analogously to the derivation of $p(s_{t - 1} \mid s_t, h)$, we establish that
\begin{align*}
  \int_{s_0} p(h \mid s_0) \, p(s_0 \mid s_T) \dif s_0
  \propto p(h \mid s_T / \sqrt{\bar{\alpha}_T})\,.
\end{align*}
Then by the Laplace approximation for $\gamma = 1 / \sqrt{\bar{\alpha}_T}$, we get
\begin{align*}
  p(h \mid \gamma s_T) \, \cN(s_T; \mathbf{0}_d, I_d)
  \propto \cN(s_T; \dot{\theta}_{T + 1} / \gamma, \dot{\Sigma}_{T + 1} / \gamma^2)\,,
\end{align*}
where $\dot{\theta}_{T + 1}, \dot{\Sigma}_{T + 1} \gets {\irls}(\mathbf{0}_d, \gamma^2 I_d, h)$.

\subsection{Supporting Lemmas}
\label{sec:supporting lemmas}

We state and prove our supplementary lemmas next.

\begin{lemma}
\label{lem:multivariate gaussian kl} Let $p(x) = \cN(x; \mu_1, \Sigma_1)$ and $q(x) = \cN(x; \mu_2, \Sigma_2)$, where $\mu_1, \mu_2 \in \realset^d$ and $\Sigma_1, \Sigma_2 \in \realset^{d \times d}$. Then
\begin{align*}
  d(p, q)
  = \frac{1}{2} \left((\mu_2 - \mu_1)\T \Sigma_2^{-1} (\mu_2 - \mu_1) +
  \trace(\Sigma_2^{-1} \Sigma_1) - \log \frac{\det(\Sigma_1)}{\det(\Sigma_2)} - d\right)\,.
\end{align*}
Moreover, when $\Sigma_1 = \Sigma_2$,
\begin{align*}
  d(p, q)
  = \frac{1}{2} (\mu_2 - \mu_1)\T \Sigma_2^{-1} (\mu_2 - \mu_1)\,.
\end{align*}
\end{lemma}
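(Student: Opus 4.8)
The quantity $d(p, q)$ denotes the Kullback--Leibler divergence $\Erv{p}{\log p(X) - \log q(X)}$, so the plan is to expand this expectation using the explicit Gaussian log-densities and then evaluate the resulting moments under $X \sim p$. First I would substitute $\log \cN(x; \mu, \Sigma) = -\frac{d}{2} \log(2\pi) - \frac{1}{2} \log \det(\Sigma) - \frac{1}{2}(x - \mu)\T \Sigma^{-1}(x - \mu)$ for both $p$ and $q$. The normalizing constants $-\frac{d}{2}\log(2\pi)$ cancel in the difference $\log p(x) - \log q(x)$, leaving the constant log-determinant term $-\frac{1}{2} \log(\det(\Sigma_1)/\det(\Sigma_2))$ together with two quadratic forms, $-\frac{1}{2}(x - \mu_1)\T \Sigma_1^{-1}(x - \mu_1)$ and $+\frac{1}{2}(x - \mu_2)\T \Sigma_2^{-1}(x - \mu_2)$.

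The core of the argument is evaluating the expectations of these two quadratic forms under $X \sim \cN(\mu_1, \Sigma_1)$, for which I would use the trace identity $\Erv{p}{v\T M v} = \trace(M\, \Erv{p}{v v\T})$ together with $\Erv{p}{(X - \mu_1)(X - \mu_1)\T} = \Sigma_1$ and $\Erv{p}{X - \mu_1} = \mathbf{0}_d$. The first quadratic form, centered at $\mu_1$, gives $\Erv{p}{(X - \mu_1)\T \Sigma_1^{-1}(X - \mu_1)} = \trace(\Sigma_1^{-1} \Sigma_1) = d$. For the second, the key step is to decompose $X - \mu_2 = (X - \mu_1) + (\mu_1 - \mu_2)$ and expand the quadratic form into three pieces: the cross term is linear in $X - \mu_1$ and therefore vanishes in expectation, the centered piece yields $\trace(\Sigma_2^{-1} \Sigma_1)$, and the constant piece yields $(\mu_2 - \mu_1)\T \Sigma_2^{-1}(\mu_2 - \mu_1)$. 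Collecting the log-determinant term, $-\frac{1}{2} d$, $+\frac{1}{2}\trace(\Sigma_2^{-1}\Sigma_1)$, and $+\frac{1}{2}(\mu_2 - \mu_1)\T \Sigma_2^{-1}(\mu_2 - \mu_1)$ reproduces the claimed formula.

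For the special case $\Sigma_1 = \Sigma_2 = \Sigma$, I would simply substitute into the general formula: the determinant ratio is $1$ so its logarithm vanishes, and $\trace(\Sigma^{-1}\Sigma) = \trace(I_d) = d$ cancels the $-d$ term, leaving only the Mahalanobis term $\frac{1}{2}(\mu_2 - \mu_1)\T \Sigma_2^{-1}(\mu_2 - \mu_1)$.

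This is a completely routine computation, so there is no genuine obstacle; the only place demanding a little care is the decomposition of the second quadratic form, where one must verify that the cross term is indeed mean-zero and that the trace identity is applied with the covariance $\Sigma_1$ under which the expectation is taken, rather than $\Sigma_2$.
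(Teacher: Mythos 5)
Your derivation is correct, and it is exactly the computation the paper's proof gestures at — the paper simply states that the result "follows from the definitions of KL divergence and multivariate Gaussians," and your expansion of $\Erv{p}{\log p(X) - \log q(X)}$ via the trace identity and the decomposition $X - \mu_2 = (X - \mu_1) + (\mu_1 - \mu_2)$ is the standard way to carry that out. No gaps; your special-case check for $\Sigma_1 = \Sigma_2$ also matches the claim.
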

\begin{proof}
The proof follows from the definitions of KL divergence and multivariate Gaussians.
\end{proof}

\begin{lemma}
\label{lem:gaussian product} Fix $\mu_1 \in \realset^d$, $\Sigma_1 \succeq 0$, $\mu_2 \in \realset^d$, and $\Sigma_2 \succeq 0$. Then
\begin{align*}
  \cN(x; \mu_1, \Sigma_1) \, \cN(x; \mu_2, \Sigma_2)
  \propto \cN(x; \mu, \Sigma)\,,
\end{align*}
where
\begin{align*}
  \mu
  = \Sigma (\Sigma_1^{-1} \mu_1 + \Sigma_2^{-1} \mu_2)\,, \quad
  \Sigma
  = (\Sigma_1^{-1} + \Sigma_2^{-1})^{-1}\,.
\end{align*}
\end{lemma}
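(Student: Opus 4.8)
The plan is to prove the identity by comparing quadratic forms in the exponents, that is, by completing the square. Since the claim is only asserted up to a proportionality constant, I may discard every multiplicative factor that does not depend on $x$, including the normalization constants of both Gaussians on the left-hand side. (Although the statement writes $\Sigma_1 \succeq 0$ and $\Sigma_2 \succeq 0$, the appearance of $\Sigma_1^{-1}$ and $\Sigma_2^{-1}$ in the conclusion presumes both are invertible, hence positive definite; I assume this throughout.)

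First I would write
\[
  \cN(x; \mu_1, \Sigma_1) \, \cN(x; \mu_2, \Sigma_2)
  \propto \exp\!\left[-\tfrac{1}{2} \, Q(x)\right]\,,
\]
where $Q(x) = (x - \mu_1)\T \Sigma_1^{-1} (x - \mu_1) + (x - \mu_2)\T \Sigma_2^{-1} (x - \mu_2)$ is the sum of the two quadratic forms. Expanding each term and grouping by degree in $x$ (using the symmetry of $\Sigma_i^{-1}$, so that $x\T \Sigma_i^{-1} \mu_i = \mu_i\T \Sigma_i^{-1} x$) gives
\[
  Q(x)
  = x\T (\Sigma_1^{-1} + \Sigma_2^{-1}) x
  - 2 x\T (\Sigma_1^{-1} \mu_1 + \Sigma_2^{-1} \mu_2) + c\,,
\]
where $c = \mu_1\T \Sigma_1^{-1} \mu_1 + \mu_2\T \Sigma_2^{-1} \mu_2$ does not depend on $x$.

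Next I would set $\Sigma = (\Sigma_1^{-1} + \Sigma_2^{-1})^{-1}$ and $\mu = \Sigma (\Sigma_1^{-1} \mu_1 + \Sigma_2^{-1} \mu_2)$, exactly as in the claim, so that $\Sigma^{-1} = \Sigma_1^{-1} + \Sigma_2^{-1}$ and $\Sigma^{-1} \mu = \Sigma_1^{-1} \mu_1 + \Sigma_2^{-1} \mu_2$. Substituting these identities into the grouped expression yields $Q(x) = x\T \Sigma^{-1} x - 2 x\T \Sigma^{-1} \mu + c$, and completing the square gives $Q(x) = (x - \mu)\T \Sigma^{-1} (x - \mu) + (c - \mu\T \Sigma^{-1} \mu)$, where the trailing term is again independent of $x$. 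Absorbing this constant into the proportionality factor, I obtain
\[
  \cN(x; \mu_1, \Sigma_1) \, \cN(x; \mu_2, \Sigma_2)
  \propto \exp\!\left[-\tfrac{1}{2} (x - \mu)\T \Sigma^{-1} (x - \mu)\right]
  \propto \cN(x; \mu, \Sigma)\,,
\]
which is the desired conclusion.

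There is no serious obstacle here; the only points requiring care are purely algebraic. I would verify once that the symmetric cross terms combine to give the linear coefficient $-2 (\Sigma_1^{-1} \mu_1 + \Sigma_2^{-1} \mu_2)$, and that $\Sigma_1^{-1} + \Sigma_2^{-1}$ is invertible—guaranteed when $\Sigma_1, \Sigma_2 \succ 0$—so that $\Sigma$ and $\mu$ are well defined. Everything else is routine substitution into the completed square.
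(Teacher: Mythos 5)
Your proof is correct and follows essentially the same route as the paper's own: expanding the two quadratic forms, grouping terms, substituting $\Sigma^{-1} = \Sigma_1^{-1} + \Sigma_2^{-1}$ and $\Sigma^{-1}\mu = \Sigma_1^{-1}\mu_1 + \Sigma_2^{-1}\mu_2$, and completing the square while absorbing $x$-independent factors into the proportionality constant. Your explicit remark that the stated hypotheses $\Sigma_1 \succeq 0$, $\Sigma_2 \succeq 0$ must in fact be strengthened to positive definiteness for $\Sigma_1^{-1}$ and $\Sigma_2^{-1}$ to exist is a small point of care that the paper leaves implicit.
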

\begin{proof}
This is a classic result, which is proved as
\begin{align*}
  \cN(x; \mu_1, \Sigma_1) \, \cN(x; \mu_2, \Sigma_2)
  & \propto \exp\left[- \frac{1}{2}
  ((x - \mu_1)\T \Sigma_1^{-1} (x - \mu_1) +
  (x - \mu_2)\T \Sigma_2^{-1} (x - \mu_2))\right] \\
  & \propto \exp\left[- \frac{1}{2}
  (x\T \Sigma_1^{-1} x - 2 x\T \Sigma_1^{-1} \mu_1 +
  x\T \Sigma_2^{-1} x - 2 x\T \Sigma_2^{-1} \mu_2)\right] \\
  & = \exp\left[- \frac{1}{2}
  (x\T \Sigma^{-1} x -
  2 x\T \Sigma^{-1} \Sigma (\Sigma_1^{-1} \mu_1 + \Sigma_2^{-1} \mu_2))\right] \\
  & \propto 
  \exp\left[- \frac{1}{2} (x - \mu)\T \Sigma^{-1} (x - \mu)\right]
  \propto \cN(x; \mu, \Sigma)\,.
\end{align*}
The neglected factors depend on constants $\mu_1$, $\mu_2$, $\Sigma_1$, and $\Sigma_2$. This completes the proof.
\end{proof}

\section{Learning the Reverse Process}
\label{sec:learning reverse process}

One property of our model is that $q(s_T) \approx \cN(s_T; \mathbf{0}_d, I_d)$ when $T$ is sufficiently large \citep{ho20denoising}. Since $S_T$ has the same distribution in the reverse process $p$, $p$ can be learned from the forward process $q$ by simply reversing it. This is done as follows. Using the definition of the forward process in \eqref{eq:forward process}, \citet{ho20denoising} showed that
\begin{align}
  q(s_{t - 1} \mid s_t, s_0)
  = \cN(s_{t - 1}; \tilde{\mu}_t(s_t, s_0), \tilde{\beta}_t I_d)
  \label{eq:stage conditional}
\end{align}
holds for any $s_0$ and $s_t$, where
\begin{align}
  \tilde{\mu}_t(s_t, s_0)
  = \frac{\sqrt{\bar{\alpha}_{t - 1}} \beta_t}{1 - \bar{\alpha}_t} s_0 +
  \frac{\sqrt{\alpha_t} (1 - \bar{\alpha}_{t - 1})}{1 - \bar{\alpha}_t} s_t\,, \quad
  \tilde{\beta}_t
  = \frac{1 - \bar{\alpha}_{t - 1}}{1 - \bar{\alpha}_t} \beta_t\,, \quad
  \bar{\alpha}_t
  = \prod_{\ell = 1}^t \alpha_\ell\,.
  \label{eq:learning reverse process}
\end{align}
Therefore, the latent variable in stage $t - 1$, $S_{t - 1}$, is easy to sample when $s_t$ and $s_0$ are known. To estimate $s_0$, which is unknown when sampling from the reverse process, we use the forward process again. In particular, \eqref{eq:forward process} implies that $s_t = \sqrt{\bar{\alpha}_t} s_0 + \sqrt{1 - \bar{\alpha}_t} \varepsilon_t$, where $\varepsilon_t \sim \cN(\mathbf{0}_d, I_d)$ is a standard Gaussian noise. This identity can be rearranged as
\begin{align*}
  s_0
  = \frac{1}{\sqrt{\bar{\alpha}_t}} (s_t - \sqrt{1 - \bar{\alpha}_t} \varepsilon_t)\,.
\end{align*}
To obtain $\varepsilon_t$, which is unknown when sampling from $p$, we learn to regress it from $s_t$ \citep{ho20denoising}.

The regressor is learned as follows. Let $\varepsilon_t(\cdot; \psi)$ be a regressor of $\varepsilon_t$ parameterized by $\psi$ and $\cD = \set{s_0}$ be a dataset of training examples. We sample $s_0$ uniformly at random from $\cD$ and then solve
\begin{align}
  \psi_t
  = \argmin_\psi \Erv{q}{\normw{\varepsilon_t - \varepsilon_t(S_t; \psi)}{2}^2}
  \label{eq:least squares}
\end{align}
per stage. The expectation is approximated by sampled $s_0$. Note that we slightly depart from \citet{ho20denoising}. Since each regressor has its own parameters, the original optimization problem over $T$ stages decomposes into $T$ subproblems.

\section{Additional Experiments}
\label{sec:additional experiments}

This section contains four additional experiments.

\subsection{Additional Synthetic Problems}
\label{sec:additional synthetic problems}

\begin{figure}[t]
  \centering
  \includegraphics[width=5.4in]{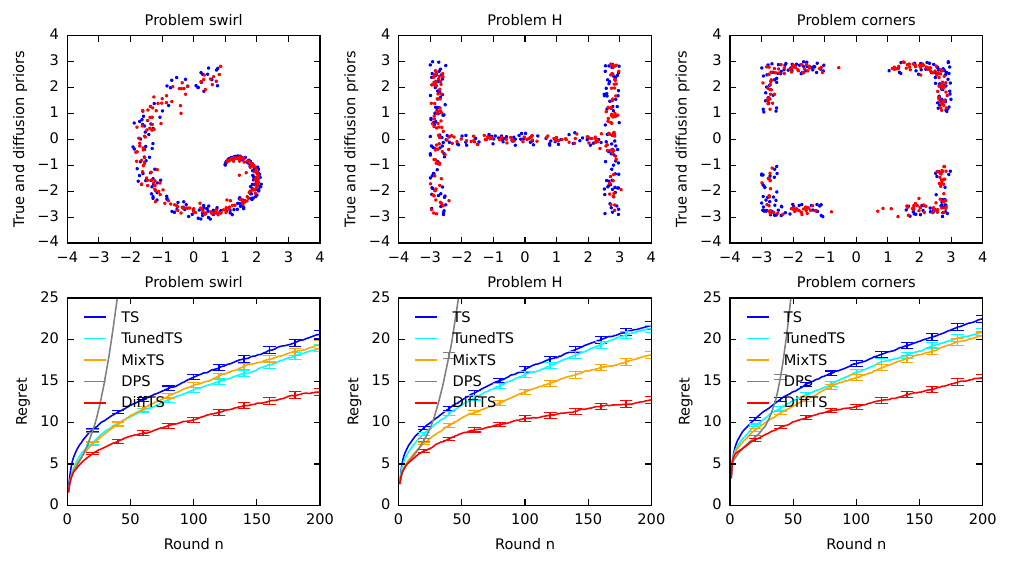}
  \vspace{-0.1in}
  \caption{Evaluation of \diffts on another three synthetic problems. The first row shows samples from the true (blue) and diffusion model (red) priors. The second row shows the regret of \diffts and the baselines as a function of round $n$.}
  \label{fig:synthetic experiment 2}
\end{figure}

In \cref{sec:synthetic experiment}, we show results for three hand-selected problems out of six. We report results on the other three problems in \cref{fig:synthetic experiment 2}. We observe the same trends as in \cref{sec:synthetic experiment}.

\subsection{MNIST Experiment}
\label{sec:mnist experiment}

\begin{figure}[t]
  \centering
  \includegraphics[width=5.4in]{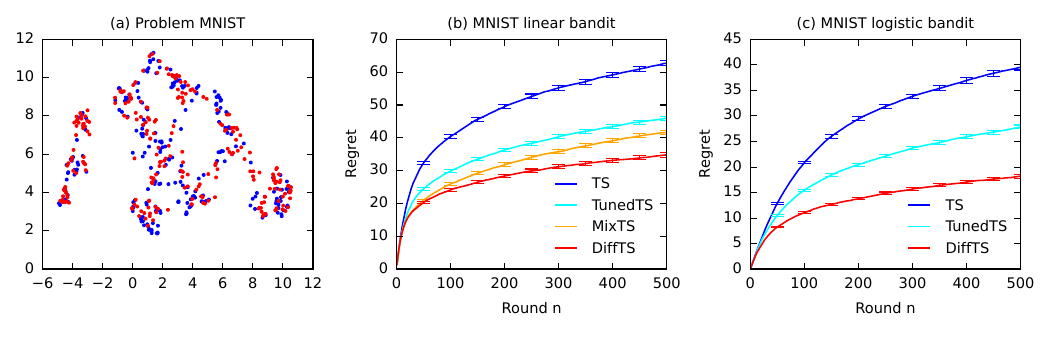}
  \vspace{-0.15in}
  \caption{Evaluation of \diffts on the MNIST dataset: (a) shows samples from the true (blue) and diffusion model (red) priors, (b) shows regret in the linear bandit, and (c) shows regret in the logistic bandit.}
  \label{fig:mnist experiment}
\end{figure}

The next experiment is on the MNIST dataset \citep{mnist}. We start with learning an MLP-based multi-way classifier for digits and extract their $d = 8$ dimensional embeddings. These are used as features in our experiment. We generate a distribution over model parameters $\theta_*$ as follows: (1) we choose a random positive label, assign it reward $1$, and assign reward $-1$ to all other labels; (2) we subsample a random dataset of size $20$, with $50\%$ positive and $50\%$ negative labels; (3) we train a linear model, which gives us a single $\theta_*$. We repeat this $10\,000$ times and get a distribution over $\theta_*$.

We consider both linear and logistic bandits. In both, the model parameter $\theta_*$ is initially sampled from the prior. In each round, $K = 10$ random actions are chosen randomly from all digits. In the linear bandit, the mean reward for a digit with embedding $x$ is $x\T \theta_*$ and the reward noise is $\sigma = 1$. In the logistic bandit, the mean reward is $g(x\T \theta_*)$, where $g$ is a sigmoid.

Our MNIST results are reported in \cref{fig:mnist experiment}. We observe again that \diffts has a lower regret than all baselines, because the learned prior captures the underlying distribution of $\theta_*$ well. We note that both the prior and diffusion prior distributions exhibit a strong cluster structure (\cref{fig:mnist experiment}a), where each cluster represents one label.

\subsection{Ablation Studies}
\label{sec:ablation studies}

\begin{figure}[t]
  \centering
  \includegraphics[width=\textwidth]{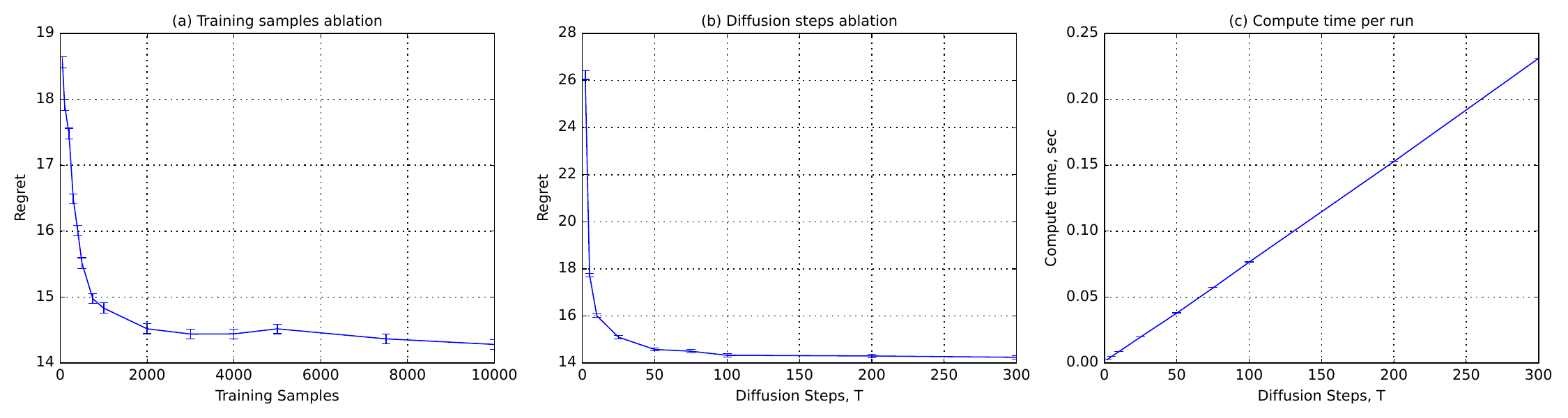}
  \vspace{-0.2in}
  \caption{An ablation study of \diffts on the cross problem: (a) we vary the number of samples for training the diffusion prior and report regret, (b) we vary the number of diffusion stages $T$ and report regret, and (c) we vary the number of diffusion stages $T$ and report computation time.}
  \label{fig:ablation}
\end{figure}

\begin{figure}[t]
  \centering
  \includegraphics[width=5.4in]{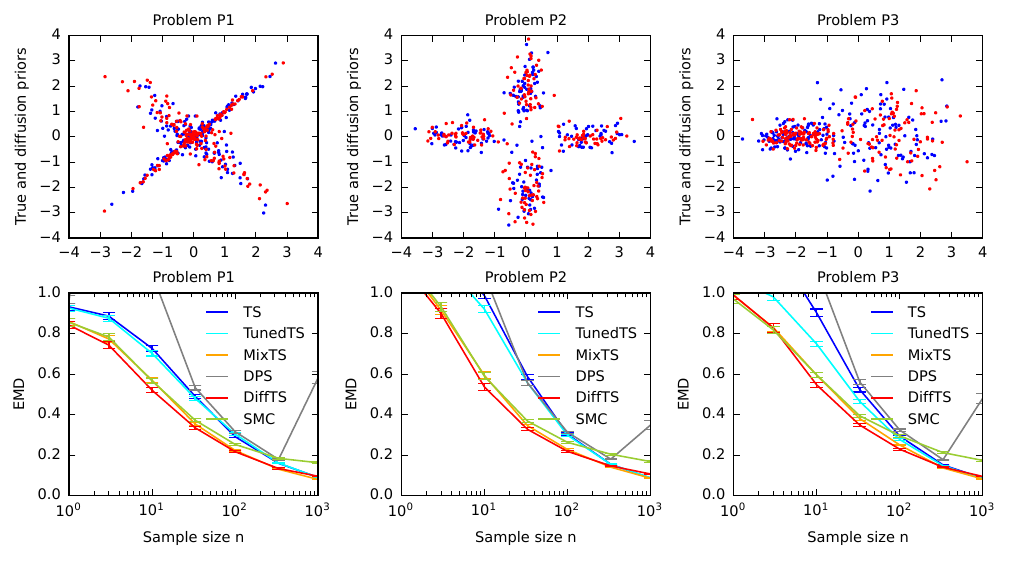}
  \vspace{-0.1in}
  \caption{Evaluation on Gaussian mixture variants of the synthetic problems in Figure 2. The first row shows samples from the true (blue) and diffusion model (red) priors. The second row shows the earth mover's distance of \diffts and baseline posteriors from the true posterior as a function of sample size $n$.}
  \label{fig:posterior quality}
\end{figure}

We conduct three ablation studies on the cross problem in \cref{fig:synthetic experiment}.

In all experiments, the number of samples for training diffusion priors was $10\,000$. In \cref{fig:ablation}a, we vary it from $100$ to $10\,000$. We observe that the regret decreases as the number of samples increases, due to learning a better prior approximation. The trend stabilizes around $3\,000$ training samples. We conclude that the quality of the learned prior approximation has a major impact on regret.

In all experiments, the number of diffusion stages was $T = 100$. In \cref{fig:ablation}b, we vary it from $1$ to $300$ and observe its impact on regret. While the regret at $T = 1$ is high, it decreases quickly as $T$ increases. It stabilizes around $T = 100$, which we used in our experiments. In \cref{fig:ablation}c, we vary $T$ from $1$ to $300$ and observe its effect on the computation time of posterior sampling. The time is linear in $T$, as suggested in \cref{sec:linear posterior}. The main contributor is the neural network regressor.

\subsection{Non-Bandit Evaluation}
\label{sec:non-bandit evaluation}

We use Gaussian mixture variants of the synthetic problems in \cref{fig:synthetic experiment} for our non-bandit evaluation. The action in round $k$ is chosen uniformly at random (not adaptively). Since the priors are Gaussian mixtures, the true posterior distribution can be computed in a closed form using \mixts and we can measure the distance of posterior approximations from it. We use the \emph{earth mover's distance (EMD)} between posterior samples from the true posterior and its approximation. We also considered the KL divergence. However, we could not apply it because the posteriors of \diffts and \dps do not have analytical forms.

We evaluate all methods from \cref{fig:synthetic experiment}. In addition, we implement a \emph{sequential Monte Carlo (SMC)} sampler \citep{doucet01sequential}. The initial particles are sampled uniformly at random from the prior. At each round, the particles are perturbed by a Gaussian noise. The standard deviation of the noise is initialized as a fraction of the observation noise and decays over time, as the posterior concentrates. The particles are weighted according to the likelihood of the observation in the round. Finally, we use normalized likelihood weights to resample the particles. The number of particles is $3\,000$ and we tune SMC to get good posterior approximations. The computational cost of SMC is comparable to \diffts.

Our results are reported in \cref{fig:posterior quality}. We observe that \diffts approximations are comparable to \mixts, which has an exact posterior in this setting. The second best performing method is SMC. Its approximations worsen as the sample size $n$ increases. \dps approximations also get worse as $n$ increases, which caused instability in \cref{fig:synthetic experiment}.

\section{Implementation of \citet{chung23diffusion}}
\label{sec:dps}

\begin{algorithm}[t]
  \caption{\dps of \citet{chung23diffusion}.}
  \label{alg:dps}
  \begin{algorithmic}[1]
    \State \textbf{Input:} Model parameters $\tilde{\sigma}_t$ and $\zeta_t$
    \Statex \vspace{-0.05in}
    \State Initial sample $S_T \sim \cN(\mathbf{0}_d, I_d)$
    \For{stage $t = T, \dots, 1$}
      \State $\hat{S} \gets - \frac{\varepsilon_t(S_t; \psi_t)}{1 - \bar{\alpha}_t}$
      \State $\hat{S}_0 \gets \frac{1}{\sqrt{\bar{\alpha}_t}}
      (S_t + (1 - \bar{\alpha}_t) \hat{S})$
      \State $Z \sim \cN(\mathbf{0}_d, I_d)$
      \State $S_{t - 1} \gets
      \frac{\sqrt{\bar{\alpha}_{t - 1}} \beta_t}{1 - \bar{\alpha}_t} \hat{S}_0 +
      \frac{\sqrt{\alpha_t} (1 - \bar{\alpha}_{t - 1})}{1 - \bar{\alpha}_t} S_t +
      \tilde{\sigma}_t Z -
      \zeta_t \nabla \sum_{\ell = 1}^N (y_\ell - \phi_\ell\T \hat{S}_0)^2$
    \EndFor
    \Statex \vspace{-0.1in}
    \State \textbf{Output:} Posterior sample $S_0$
  \end{algorithmic}
\end{algorithm}

In our experiments, we compare to diffusion posterior sampling (\dps) with a Gaussian observation noise (Algorithm 1 in \citet{chung23diffusion}). Our implementation is presented in \cref{alg:dps}. The score is $\hat{S} = - \varepsilon_t(S_t; \psi_t) / (1 - \bar{\alpha}_t)$, where $\varepsilon_t(S_t; \psi_t)$ is a regression estimate of the forward process noise $\varepsilon_t$ in \cref{sec:learning reverse process}. We set $\tilde{\sigma}_t = \sqrt{\tilde{\beta}_t}$, which is the same amount of noise as in our reverse process (\cref{sec:diffusion models}). The term
\begin{align*}
  \nabla \sum_{\ell = 1}^N (y_\ell - \phi_\ell\T \hat{S}_0)^2
\end{align*}
is the gradient of the negative log-likelihood with respect to $S_t$.

As discussed in Appendices C.2 and D.1 of \citet{chung23diffusion}, $\zeta_t$ in \dps needs to be tuned for good performance. This is because $\nabla \sum_{\ell = 1}^N (y_\ell - \phi_\ell\T \hat{S}_0)^2$ grows with the number of observations, which causes instability. We also observed this in our experiments (\cref{sec:synthetic experiment}). To make \dps work well, we follow \citet{chung23diffusion} and set
\begin{align*}
  \zeta_t
  = \frac{1}{\sqrt{\sum_{\ell = 1}^N (y_\ell - \phi_\ell\T \hat{S}_0)^2}}\,.
\end{align*}
While this significantly improves the performance of \dps, it does not prevent failures. The fundamental problem is that gradient-based optimization is sensitive to the step size, especially when the optimized function is steep. Note that \laplacedps does not have any such hyper-parameter.

\clearpage

\section*{NeurIPS Paper Checklist}

\begin{enumerate}

\item {\bf Claims}
    \item[] Question: Do the main claims made in the abstract and introduction accurately reflect the paper's contributions and scope?
    \item[] Answer: \answerYes{}
    \item[] Justification: The abstract and introduction clearly state all contributions. The introduction also points to where those contributions are made.

\item {\bf Limitations}
    \item[] Question: Does the paper discuss the limitations of the work performed by the authors?
    \item[] Answer: \answerYes{}
    \item[] Justification: The increase in computational cost is discussed in \cref{sec:linear posterior} and shown empirically in \cref{sec:synthetic experiment}. We also conduct an ablation study in \cref{sec:ablation studies}, where we show how the regret of \diffts scales with the number of samples used for pre-training the prior and the number of stages in the diffusion model prior.

\item {\bf Theory Assumptions and Proofs}
    \item[] Question: For each theoretical result, does the paper provide the full set of assumptions and a complete (and correct) proof?
    \item[] Answer: \answerYes{}
    \item[] Justification: The main claims are stated and discussed in \cref{sec:posterior sampling}. Their proofs are in \cref{sec:proofs}.

\item {\bf Experimental Result Reproducibility}
    \item[] Question: Does the paper fully disclose all the information needed to reproduce the main experimental results of the paper to the extent that it affects the main claims and/or conclusions of the paper (regardless of whether the code and data are provided or not)?
    \item[] Answer: \answerYes{}
    \item[] Justification: We also include code to reproduce the synthetic results in \cref{fig:synthetic experiment,fig:synthetic experiment 2}.

\item {\bf Open access to data and code}
    \item[] Question: Does the paper provide open access to the data and code, with sufficient instructions to faithfully reproduce the main experimental results, as described in supplemental material?
    \item[] Answer: \answerYes{}
    \item[] Justification: We include code to reproduce the synthetic results in \cref{fig:synthetic experiment,fig:synthetic experiment 2}.

\item {\bf Experimental Setting/Details}
    \item[] Question: Does the paper specify all the training and test details (e.g., data splits, hyperparameters, how they were chosen, type of optimizer, etc.) necessary to understand the results?
    \item[] Answer: \answerYes{}
    \item[] Justification: The experiments are described to a sufficient level to be reproducible. To make sure, we include code to reproduce the synthetic results in \cref{fig:synthetic experiment,fig:synthetic experiment 2}.

\item {\bf Experiment Statistical Significance}
    \item[] Question: Does the paper report error bars suitably and correctly defined or other appropriate information about the statistical significance of the experiments?
    \item[] Answer: \answerYes{}
    \item[] Justification: All plots in the paper have error bars.

\item {\bf Experiments Compute Resources}
    \item[] Question: For each experiment, does the paper provide sufficient information on the computer resources (type of compute workers, memory, time of execution) needed to reproduce the experiments?
    \item[] Answer: \answerNo{}
    \item[] Justification: Our experiments are not large scale.
    
\item {\bf Code Of Ethics}
    \item[] Question: Does the research conducted in the paper conform, in every respect, with the NeurIPS Code of Ethics \url{https://neurips.cc/public/EthicsGuidelines}?
    \item[] Answer: \answerYes{}
    \item[] Justification: We checked the link and comply.

\item {\bf Broader Impacts}
    \item[] Question: Does the paper discuss both potential positive societal impacts and negative societal impacts of the work performed?
    \item[] Answer: \answerNA{}
    \item[] Justification: This work is algorithmic and not tied to a particular application that would have immediate negative impact.
    
\item {\bf Safeguards}
    \item[] Question: Does the paper describe safeguards that have been put in place for responsible release of data or models that have a high risk for misuse (e.g., pretrained language models, image generators, or scraped datasets)?
    \item[] Answer: \answerNA{}
    \item[] Justification: This paper does not pose such a risk.

\item {\bf Licenses for existing assets}
    \item[] Question: Are the creators or original owners of assets (e.g., code, data, models), used in the paper, properly credited and are the license and terms of use explicitly mentioned and properly respected?
    \item[] Answer: \answerYes{}
    \item[] Justification: All used assets are stated and cited.

\item {\bf New Assets}
    \item[] Question: Are new assets introduced in the paper well documented and is the documentation provided alongside the assets?
    \item[] Answer: \answerNA{}
    \item[] Justification: This paper does not release new assets.

\item {\bf Crowdsourcing and Research with Human Subjects}
    \item[] Question: For crowdsourcing experiments and research with human subjects, does the paper include the full text of instructions given to participants and screenshots, if applicable, as well as details about compensation (if any)? 
    \item[] Answer: \answerNA{}
    \item[] Justification: No crowdsourcing or research with human subjects.

\item {\bf Institutional Review Board (IRB) Approvals or Equivalent for Research with Human Subjects}
    \item[] Question: Does the paper describe potential risks incurred by study participants, whether such risks were disclosed to the subjects, and whether Institutional Review Board (IRB) approvals (or an equivalent approval/review based on the requirements of your country or institution) were obtained?
    \item[] Answer: \answerNA{}
    \item[] Justification: No crowdsourcing or research with human subjects.

\end{enumerate}

\end{document}